\documentclass{article}

\PassOptionsToPackage{numbers}{natbib}
\usepackage[preprint]{neurips_2026}

\usepackage[utf8]{inputenc}
\usepackage[T1]{fontenc}
\usepackage{amsmath,amssymb,amsthm}
\usepackage{bm}
\usepackage{algorithm}
\usepackage{algpseudocode}
\usepackage{booktabs}
\usepackage{array}
\usepackage{graphicx}
\usepackage{wrapfig}
\usepackage{subcaption}
\usepackage[table]{xcolor}
\usepackage{multirow}
\usepackage{pifont}
\usepackage{enumitem}
\usepackage{url}
\usepackage[colorlinks=true,linkcolor=blue!70!black,citecolor=green!50!black,urlcolor=magenta!70!black]{hyperref}

\makeatletter
\newif\if@inappendix
\@inappendixfalse
\let\@orig@addcontentsline\addcontentsline
\def\@apx@toc{toc}
\renewcommand{\addcontentsline}[3]{%
  \@orig@addcontentsline{#1}{#2}{#3}%
  \if@inappendix
    \def\@apx@tempa{#1}%
    \ifx\@apx@tempa\@apx@toc
      \@orig@addcontentsline{apx}{#2}{#3}%
    \fi
  \fi
}
\newcommand{\MarkAppendixInTOC}{\@inappendixtrue}
\newcommand{\AppendixOnlyTOC}{%
  \begingroup
  \par\noindent
  {\Large\bfseries Appendix Outline\par}\vspace{0.5em}
  \@starttoc{apx}%
  \par\vspace{0.5em}
  \endgroup
}
\makeatother

\newcommand{\method}{MAdam}

\newcommand{\pmv}[2]{#1\,{\scriptsize$\pm$\,#2}}
\newcommand{\best}[1]{\textbf{#1}}
\newcommand{\up}{$\uparrow$}
\newcommand{\down}{$\downarrow$}
\definecolor{highlight}{RGB}{210,230,255}
\definecolor{tableblue}{RGB}{135,180,235}
\definecolor{tablelightblue}{RGB}{210,230,255}
\definecolor{02lightgray}{RGB}{180,180,180}
\newcommand{\rlightgray}{\rowcolor{02lightgray!45}}
\newcolumntype{G}{>{\columncolor{02lightgray!45}[\tabcolsep][0pt]}c}

\theoremstyle{definition}
\newtheorem{definition}{Definition}
\newtheorem{proposition}{Proposition}
\newtheorem{lemma}{Lemma}

\theoremstyle{remark}
\newtheorem{remark}{Remark}

\title{\method{}: Metric-Aware Multi-Objective Adam}
\author{
Fengbei Liu$^{1}$ \quad
Rachit Saluja$^{1}$ \quad
Sunwoo Kwak$^{1}$ \quad
Ruibo Wang$^{3}$ \quad
Ruining Deng$^{2}$ \\
\textbf{Heejong Kim}$^{2}$ \quad
\textbf{Johannes C. Paetzold}$^{1,2}$ \quad
\textbf{Mert R. Sabuncu}$^{1,2}$\\[0.5em]
$^{1}$Cornell Tech \quad
$^{2}$Weill Cornell Medicine \quad
$^{3}$Delft University of Technology
}

\begin{document}

\maketitle

\begin{abstract}
Multi-objective optimization (MOO) underlies many machine learning problems, yet MOO solvers across the loss-balancing, gradient-balancing, and Pareto-based families almost universally hand their reconciled directions to Adam~\cite{kingma2015adam}. We show this coupling introduces two systematic gaps between the solver's intent and the optimizer's execution. The first is a \emph{weighting mismatch}: Adam's second-moment denominator entangles the time-varying preference vector with gradient statistics, marginalizing the preference into a history average and collapsing distinct Pareto trade-offs toward a near-uniform mixture. The second is a \emph{geometric mismatch}: Adam's adaptive metric distorts the Euclidean geometry MOO solvers assume, turning aligned objectives into apparent conflicts.
To resolve both jointly, we introduce \textbf{\method{}} (Metric-Aware Multi-Objective Adam), a drop-in wrapper that leaves both solver and optimizer unchanged. \method{} preconditions the reconciled direction by the preference-conditioned curvature of the scalarized objective; on this whitened input, Adam's second moment collapses to identity, so the realized update is governed by the preference-conditioned metric.
Across multi-task learning, Pareto-front recovery, physics-informed neural networks, and medical imaging, \method{} consistently improves over Adam for every solver family.
Code is at \href{https://anonymous.4open.science/r/Madam_code-51ED/README.md}{link}.
\end{abstract}

\section{Introduction}\label{sec:intro}

Many practical learning problems require a shared model to satisfy multiple objectives simultaneously. Such settings are naturally cast as multi-objective optimization (MOO). Three MOO solver paradigms have emerged~\cite{chen2025gradient}: \emph{loss-balancing} methods adapt objective weights to rebalance gradient magnitudes~\citep{sener2018multi,navon2022multi,liu2024famo,lin2024smooth}; \emph{gradient-balancing} methods modify objective gradients to reduce inter-objective conflict~\citep{yu2020gradient,liu2021conflict,liu2021imtl}; and \emph{Pareto-based} methods steer the update toward a Pareto front target region~\citep{lin2019pareto,ruchte2021scalable,lin2025palora,chen2024ferero}. Most gradient based MOO solvers aim to produce a single reconciled direction.

However, gradient-based optimization relies on two steps: \emph{constructing} the descent direction and \emph{executing} the update with a suitable metric, which is a role all three MOO solver families above unanimously delegate to Adam~\citep{kingma2015adam}. Adam implicitly imposes its own adaptive diagonal metric via the second-moment denominator, which under stationarity approximates a diagonal empirical Fisher information matrix~\citep{kunstner2019limitations,martens2020new}. We show this default pairing introduces two systematic mismatches between what the solver intends and what the optimizer actually executes.

Firstly, in loss-balancing and Pareto-based methods, the solver's reconciled direction depends on a time-varying preference vector $\boldsymbol{\lambda}^{(t)}$: loss-balancing methods adapt these preferences to rebalance gradient magnitudes, while Pareto-based methods adjust them to steer toward a target Pareto region. However, Adam's running second moment entangles these preferences with the objective gradient statistics into a single surrogate, weakening the intended preference strategy and collapsing distinct Pareto trade-offs into a near-uniform average (Proposition~\ref{prop:adam-weighting-mismatch}).

Secondly, MOO solvers typically assume a raw Euclidean geometry: loss-balancing methods explicitly weight the objectives through linear scalarization. Gradient-balancing methods measure gradient similarities and project to reduce conflict in Euclidean space. Pareto-based methods steer a Pareto target region through linear scalarization of user preferences. However, Adam is not metric-neutral: its second-moment denominator imposes a diagonal RMS metric on every update, conflicting with the Euclidean metric assumed by MOO solvers. This geometric mismatch distorts the solver's reconciled direction, turning aligned objectives into apparent conflicts and vice versa (Proposition~\ref{prop:adam-geometry}).

These observations raise two questions: \textbf{Q1. Which metric should govern MOO updates?} \textbf{Q2. How can it be realized within the existing solver--Adam pipeline? }

We propose \method{} (Metric-aware Multi-objective Adam), which addresses both questions with a single mechanism. For Q1, we derive the \emph{preference-conditioned} diagonal Fisher information matrix of the scalarized objective at the active preference, given by the second moment of the scalarized gradient and admitting a decomposition into within- and cross-objective Fisher blocks. For Q2, we precondition the solver's reconciled direction by this diagonal metric prior to the Adam update, modifying neither the solver nor the optimizer. Applying the correction at Adam's input keeps its first- and second-moment EMAs mutually consistent, so the realized parameter update is governed by the preference-conditioned metric. Our contributions are threefold
\begin{itemize}
    \item \textbf{Diagnosis of the solver--Adam mismatch.} We prove two failure modes of solver--Adam coupling in MOO: a \emph{weighting mismatch} that marginalizes the preference (Proposition~\ref{prop:adam-weighting-mismatch}) and a \emph{geometric mismatch} that distorts the Euclidean geometry assumed by MOO solvers (Proposition~\ref{prop:adam-geometry}).
    \item \textbf{Metric-aware gradient preconditioning (\method{}).} We derive the preference-conditioned diagonal Fisher of the scalarized objective and apply it as a plug-and-play preconditioner on the reconciled direction, resolving both mismatches with a single mechanism.
    \item \textbf{Empirical validation.} Across diverse MOO solvers, \method{} consistently improves over Adam on multi-task learning, Pareto-front recovery, PINN benchmarks, and medical imaging applications.
\end{itemize}

\section{Related Work}\label{sec:related}

\paragraph{Multi-Task and Pareto MTL.}
Multi-task learning reconciles competing objectives by rescaling per-objective losses, manipulating task gradients, or targeting a preference-conditioned point on the Pareto front.
Representative methods range from fixed or learned scalarizations (linear scalarization, uncertainty weighting~\citep{kendall2018uncertainty}) through dynamically adapted weights (DWA~\citep{liu2019dwa}, IMTL~\citep{liu2021imtl}, CAGrad~\citep{liu2021conflict}) to preference-conditioned models that recover the full Pareto front from a single network (PaMaL~\citep{dimitriadis2023pamal}, PaLoRA~\citep{lin2025palora}).
Across this family the constructed direction is ultimately consumed by Adam, whose diagonal second-moment EMA entangles the time-varying preference $\boldsymbol{\lambda}^{(t)}$ with per-objective gradient statistics and imposes a task-agnostic RMS metric on each step.

\paragraph{PINN and MIA Loss Balancing.}
Physics-informed neural networks (PINNs) and medical-image analysis (MIA) exhibit a closely related imbalance: residual, boundary, and reconstruction losses span widely different scales, so a single objective dominates absent explicit rebalancing.
The PINNacle benchmark~\citep{hao2024pinnacle} consolidates remedies along three axes, loss reweighting (LRA~\citep{wang2021understanding}, NTK~\citep{wang2022ntk}), residual-based sampling (RAR~\citep{wu2023comprehensive}), and adaptive activations (LAAF/GAAF~\citep{jagtap2020adaptive}); in MIA, vanilla Adam~\citep{paszke2019pytorch} and random loss weighting~\citep{lin2022reasonable} remain the prevailing choices.
Such first-order remedies adjust the descent direction yet leave the optimizer's step metric, supplied by Adam, untouched.

\paragraph{Curvature-Aware Optimizers.}
A separate line of curvature-aware optimizers, spanning natural gradient~\citep{amari1998natural,martens2020new}, K-FAC~\citep{martens2015optimizing}, Shampoo~\citep{gupta2018shampoo}, AdaHessian~\citep{yao2021adahessian}, Sophia~\citep{liu2023sophia}, and FAdam~\citep{hwang2024fadam}, targets a single scalar objective and supplants the first-order optimizer wholesale.
None preconditions the reconciled direction of a multi-objective solver under a time-varying preference.
\method{} occupies this gap: it derives a preference-conditioned diagonal Fisher of the scalarized objective and inserts it into the solver-then-Adam pipeline as a wrapper, leaving both the MOO solver and the optimizer intact.
A comprehensive listing of MTL, PINN, MIA, and curvature-aware methods is in App.~\ref{app:related-work-comprehensive}.

\section{Method}\label{sec:method}
Sec.~\ref{sec:background} defines preliminaries about MOO solver and Adam. Sec.~\ref{sec:mismatch} identifies two problems Adam has on reconciled MOO directions: a weight mismatch and a geometric mismatch. Sec.~\ref{sec:q1} derives the correct preference-conditioned curvature (Q1). Sec.~\ref{sec:q2} applies it by preconditioning the solver's output before passing it to Adam (Q2). Sec.~\ref{sec:practical} describes the practical implementation, and Algorithm~\ref{alg:madam} states the full procedure.
\subsection{Preliminaries}\label{sec:background}

\paragraph{MOO Solver Setup.}
We consider $C$ objectives with per-objective losses $\{\ell_i(\boldsymbol{\theta})\}_{i=1}^C$ under shared model parameters $\boldsymbol{\theta} \in \mathbb{R}^d$.
In many scenarios, learning can be formulated as:
$\boldsymbol{\theta}^*(\boldsymbol{\lambda}^{(t)}) = \arg \min_{\boldsymbol{\theta}} \sum_{i=1}^C \lambda_i^{(t)} \ell_i(\boldsymbol{\theta}),$
where $\boldsymbol{\lambda}^{(t)} \in \Delta^{C-1}$ is a per-step preference vector on the probability simplex.
At iteration $t$, let $\mathbf{g}_i^{(t)} := \nabla_{\boldsymbol{\theta}}\ell_i(\boldsymbol{\theta}^{(t)})$ denote the per-objective gradient.

\begin{definition}[MOO Solver]\label{def:moo-solver}
Assume a \emph{MOO solver} produces a reconciled update direction through linear scalarization:
\begin{equation}\label{eq:moo-solver}
    \mathbf{d}^{(t)} \;=\; \sum_{i=1}^{C} \lambda_i^{(t)}\,\mathbf{g}_i^{(t)}
    \;=\; \nabla_{\boldsymbol{\theta}}\,\ell_{\boldsymbol{\lambda}^{(t)}}\!\bigl(\boldsymbol{\theta}^{(t)}\bigr),
    \qquad
    \ell_{\boldsymbol{\lambda}^{(t)}} \;:=\; \sum_{i=1}^{C} \lambda_i^{(t)}\,\ell_i,
\end{equation}
This linear-scalarization form unifies three families of MOO solvers, distinguished by how $\boldsymbol{\lambda}^{(t)}$ is specified: (i) \emph{loss balancing}, where $\boldsymbol{\lambda}^{(t)}$ is computed from the latest per-objective losses and detached from $\boldsymbol{\theta}$ in the backward pass; (ii) \emph{gradient balancing}, where $\boldsymbol{\lambda}^{(t)}$ is implicit in the gradient geometry (e.g., MGDA, PCGrad, CAGrad); and (iii) \emph{preference-based Pareto}, where $\boldsymbol{\lambda}^{(t)}$ is an externally supplied preference vector that is swept to steer the iterate along the Pareto front. In all three cases, the reconciled direction $\mathbf{d}^{(t)}$ is the \emph{Euclidean gradient} of the scalarized loss $\ell_{\boldsymbol{\lambda}^{(t)}}$ on $\mathbb{R}^d$. Specific solvers in each family are detailed in App.~\ref{app:moo-solvers}.
\end{definition}

\paragraph{Adam as a Diagonal RMS Preconditioner.}
Adam~\cite{kingma2015adam} maintains exponential moving averages (EMAs) of the first and second moments of the supplied direction $\mathbf{d}^{(t)}$:
\begin{equation}\label{eq:adam_first_second_moment}
    \mathbf{m}^{(t)} = \beta_1 \mathbf{m}^{(t-1)} + (1-\beta_1)\mathbf{d}^{(t)},
    \qquad
    \mathbf{v}^{(t)} = \beta_2 \mathbf{v}^{(t-1)}
        + (1-\beta_2)\,\mathbf{d}^{(t)} \odot \mathbf{d}^{(t)},
\end{equation}
where $\odot$ denotes element-wise multiplication, and applies the bias-corrected $\hat{\mathbf{m}}^{(t)}$ and $\hat{\mathbf{v}}^{(t)}$:
\begin{equation}\label{eq:adam_update}
    \boldsymbol{\theta}^{(t+1)}
    = \boldsymbol{\theta}^{(t)}
    - \eta\,\frac{\hat{\mathbf{m}}^{(t)}}{\sqrt{\hat{\mathbf{v}}^{(t)}} + \varepsilon},
\end{equation}
where $\varepsilon$ is a small positive constant.
Eq.~\eqref{eq:adam_update} can equivalently be written in preconditioned form as
\begin{equation}\label{eq:adam-metric}
    \boldsymbol{\theta}^{(t+1)} = \boldsymbol{\theta}^{(t)} - \eta\,\bigl(\mathbf{M}_{\mathrm{Adam}}^{(t)}\bigr)^{-1}\hat{\mathbf{m}}^{(t)},
    \qquad
    \mathbf{M}_{\mathrm{Adam}}^{(t)} := \operatorname{Diag}\!\bigl(\sqrt{\hat{\mathbf{v}}^{(t)}} + \varepsilon\bigr),
\end{equation}
with $\operatorname{Diag}(\cdot)$ the diagonal matrix formed from its vector argument. In a stationary regime, $\hat{\mathbf{v}}^{(t)} \approx \mathbb{E}[\mathbf{d}^{(t)}\odot\mathbf{d}^{(t)}]$ recovers the diagonal of the empirical Fisher information matrix of the update direction~\citep{kunstner2019limitations,martens2020new}, so that $\bigl(\mathbf{M}_{\mathrm{Adam}}^{(t)}\bigr)^{2} \approx \operatorname{Diag}\!\bigl(\mathbf{F}^{(t)}\bigr)$. Eq.~\eqref{eq:adam-metric} therefore realizes approximate natural gradient descent whose metric is a diagonal empirical Fisher information matrix~\citep{amari1998natural,martens2020new,hwang2024fadam};

\subsection{Adam and Its Problems in MOO}\label{sec:mismatch}

\begin{figure}[!t]
    \centering
    \includegraphics[width=\linewidth]{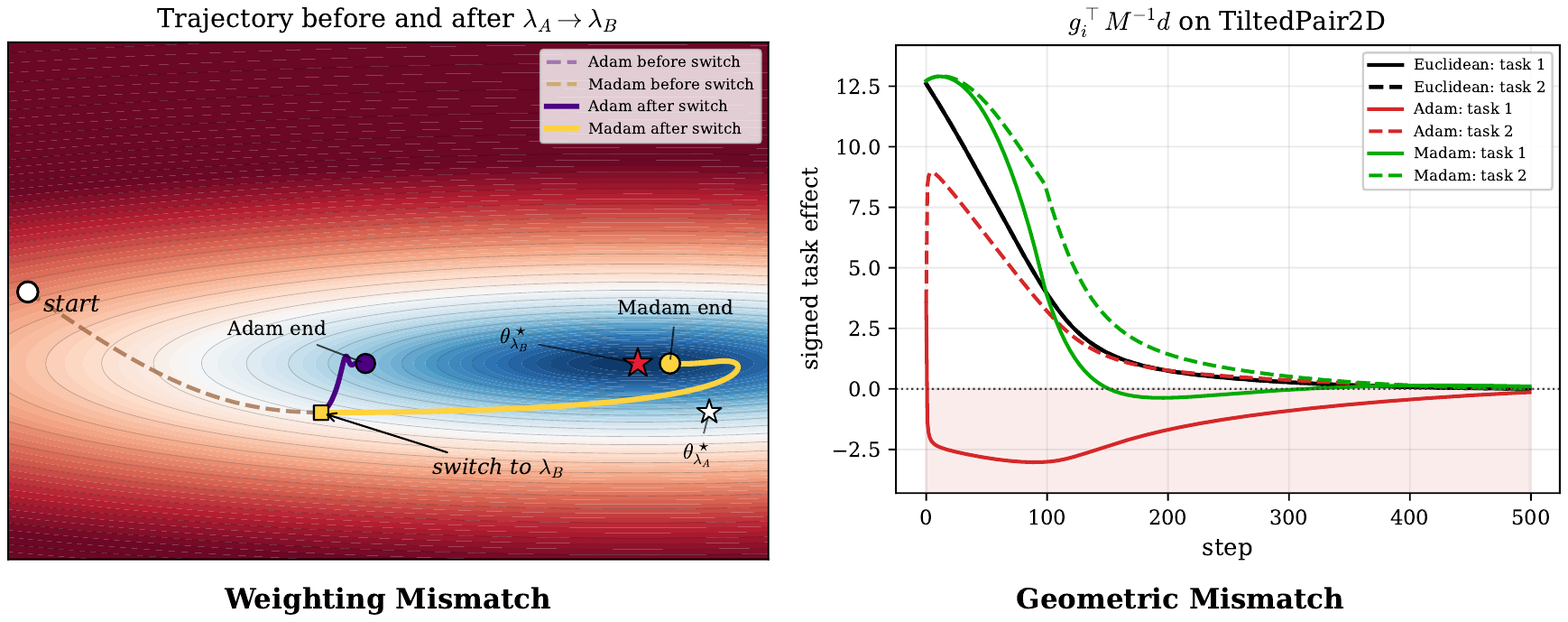}
    \caption{\textbf{Two sources of solver--Adam mismatch on a 2D tilted-quadratic MOO problem.}
    \emph{Left (Weighting Mismatch).} Decision-space trajectories across a single preference switch $\boldsymbol{\lambda}_A\!\to\!\boldsymbol{\lambda}_B$. The background contours visualize the post-switch scalarized loss $\ell_{\boldsymbol{\lambda}_B}$ as a tilted-quadratic bowl: darker shading marks lower loss and the star indicates the global minimum $\boldsymbol{\theta}^\star_{\boldsymbol{\lambda}_B}$. Adam's second-moment EMA still encodes the pre-switch $\boldsymbol{\lambda}_A$ regime, so its iterate stalls near $\boldsymbol{\theta}^\star_{\boldsymbol{\lambda}_A}$ on the wrong contour, while \method{} re-conditions to the active preference and follows the bowl down to $\boldsymbol{\theta}^\star_{\boldsymbol{\lambda}_B}$. This instantiates Proposition~\ref{prop:adam-weighting-mismatch} (multi-phase version: App.~\ref{app:multiphase-weighting}).
    \emph{Right (Geometric Mismatch).} Per-objective signed effect $\mathbf{g}_i^{\!\top}\mathbf{M}^{-1}\mathbf{d}$ over $500$ steps on the same problem. Since $\mathbf{d}$ is a linear scalarization of $\{\mathbf{g}_i\}$, the Euclidean signed effect ($\mathbf{M}=\mathbf{I}$) is non-negative for every task, so any negative excursion is a sign flip relative to the solver's intent. Adam's diagonal RMS preconditioner drives task~1's signed effect strongly negative (shaded region): the realized update increases $\ell_1$ even though the solver labeled task~1 as aligned with $\mathbf{d}$. \method{} tracks the Euclidean reference and largely eliminates the flip. This instantiates Proposition~\ref{prop:adam-geometry} (per-coordinate mechanism: App.~\ref{app:sign-flip}).}
    \label{fig:mismatch-side-by-side}
\end{figure}

\paragraph{Source 1: Weighting Mismatch.}

Adam preconditions its input as though the preference were stationary. When the MOO solver supplies a non-stationary $\boldsymbol{\lambda}^{(t)}$, as in loss-balancing and Pareto-based methods, Adam's second-moment EMA conflates preference weights with gradient statistics and absorbs the current preference into a history-averaged quantity rather than applying it at each step. Fig.~\ref{fig:mismatch-side-by-side} (left) illustrates the consequence: after a preference switch $\boldsymbol{\lambda}_A\!\to\!\boldsymbol{\lambda}_B$, Adam's iterate remains biased toward the previous optimum $\boldsymbol{\theta}^\star_{\boldsymbol{\lambda}_A}$, whereas \method{} converges to $\boldsymbol{\theta}^\star_{\boldsymbol{\lambda}_B}$.

\begin{proposition}[Adam Tracks Weight-Marginalized Curvature]\label{prop:adam-weighting-mismatch}
Assume $\boldsymbol{\lambda}^{(\tau)}$ is independent of the per-objective gradient cross-moments $\mathbf{g}_i^{(\tau)}\odot\mathbf{g}_j^{(\tau)}$ within the EMA window. This assumption holds for loss-balancing solvers, where $\boldsymbol{\lambda}^{(\tau)}$ is determined by scalar loss statistics that vary on a slower timescale than gradient noise, and for preference-based Pareto methods, where $\boldsymbol{\lambda}^{(\tau)}$ is supplied externally. Under this independence and the stationarity approximation in Eq.~\eqref{eq:adam-metric}, Adam's second-moment EMA expands as
\begin{equation}\label{eq:ema-expanded}
    \hat{\mathbf{v}}^{(t)}
    \;\approx\; \mathbb{E}\!\bigl[\mathbf{d}^{(t)}\odot\mathbf{d}^{(t)}\bigr]
    \;=\; \sum_{i,j=1}^{C}
        \mathbb{E}\!\bigl[\lambda_i\lambda_j\,\mathbf{g}_i\odot\mathbf{g}_j\bigr]
    \;=\; \sum_{i,j=1}^{C} \mathbb{E}[\lambda_i\lambda_j]\,\mathbf{F}_{ij}.
\end{equation}
\end{proposition}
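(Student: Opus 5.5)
The plan has three stages. First, make the stationarity approximation precise. Then expand the squared direction coordinatewise. Finally, use the independence assumption to factor each cross term. The expansion is essentially algebraic, so the care goes into stating exactly what $\mathbf{F}_{ij}$ means and what kind of expectation is taken.

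\textbf{Step 1: the stationarity approximation.} Unrolling the recursion in Eq.~\eqref{eq:adam_first_second_moment} gives
\[
\hat{\mathbf{v}}^{(t)}=\frac{1-\beta_2}{1-\beta_2^{t}}\sum_{\tau=1}^{t}\beta_2^{t-\tau}\,\mathbf{d}^{(\tau)}\odot\mathbf{d}^{(\tau)} .
\]
This is a convex combination of the squared directions over the EMA window. Under the stationarity approximation of Eq.~\eqref{eq:adam-metric}, I interpret this window average as the expectation $\mathbb{E}[\mathbf{d}\odot\mathbf{d}]$. The expectation is taken jointly over the sampling noise in the gradients and over the preference process $\boldsymbol{\lambda}^{(\tau)}$, with the EMA weights acting as the sampling distribution over $\tau$. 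This gives the first ``$\approx$'' in Eq.~\eqref{eq:ema-expanded}. It also makes explicit that the preference is averaged over history rather than taken at its current value, which is the point of the proposition.

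\textbf{Step 2: expand the square.} Substitute Definition~\ref{def:moo-solver}, $\mathbf{d}^{(\tau)}=\sum_i\lambda_i^{(\tau)}\mathbf{g}_i^{(\tau)}$. Bilinearity of the Hadamard product gives, coordinatewise,
\[
\mathbf{d}\odot\mathbf{d}=\sum_{i,j}\lambda_i\lambda_j\,\mathbf{g}_i\odot\mathbf{g}_j .
\]
Linearity of expectation then yields the middle equality.

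\textbf{Step 3: factor with independence.} Each $\lambda_i\lambda_j$ is a scalar function of $\boldsymbol{\lambda}^{(\tau)}$. Each $\mathbf{g}_i\odot\mathbf{g}_j$ is a function of the gradients. By the assumed independence, $\mathbb{E}[\lambda_i\lambda_j\,\mathbf{g}_i\odot\mathbf{g}_j]=\mathbb{E}[\lambda_i\lambda_j]\,\mathbb{E}[\mathbf{g}_i\odot\mathbf{g}_j]$. I then define $\mathbf{F}_{ij}:=\mathbb{E}[\mathbf{g}_i\odot\mathbf{g}_j]$, the diagonal of the within-objective ($i=j$) and cross-objective ($i\neq j$) empirical Fisher blocks. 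Stationarity guarantees that this quantity does not depend on $\tau$ within the window. This gives the final equality.

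\textbf{Main obstacle.} The delicate step is justifying the factorization when $\boldsymbol{\lambda}^{(\tau)}$ is itself computed from the losses, as in loss-balancing solvers. There, $\boldsymbol{\lambda}$ and the gradients share the same minibatch and cannot be exactly independent. I would handle this in two ways:
\begin{itemize}
\item State the factorization as exact under the assumed independence, which is exactly the hypothesis of Proposition~\ref{prop:adam-weighting-mismatch}.
\item Remark that the error term is the covariance $\operatorname{Cov}(\lambda_i\lambda_j,\mathbf{g}_i\odot\mathbf{g}_j)$. This term vanishes when $\boldsymbol{\lambda}$ is detached and varies on a slower timescale, so it is conditionally constant given the slow loss statistics while gradient noise averages out. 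For externally supplied Pareto preferences it is zero outright.
\end{itemize}
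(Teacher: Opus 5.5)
Your proposal is correct and takes essentially the same route as the paper. The paper gives no separate proof; it inlines exactly this chain in Eq.~\eqref{eq:ema-expanded}: identify the bias-corrected EMA with $\mathbb{E}[\mathbf{d}\odot\mathbf{d}]$ under stationarity, expand bilinearly via Definition~\ref{def:moo-solver}, and factor each term by the assumed independence with $\mathbf{F}_{ij}:=\mathbb{E}[\mathbf{g}_i\odot\mathbf{g}_j]$ (the definition used in App.~\ref{app:moo-solvers}). Two parts of your write-up go slightly beyond what the paper states: unrolling the EMA as a convex combination over the window makes explicit why $\mathbb{E}[\lambda_i\lambda_j]$ is a history average, and naming $\operatorname{Cov}(\lambda_i\lambda_j,\mathbf{g}_i\odot\mathbf{g}_j)$ as the error term when independence fails is also left implicit in the paper.
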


The middle expression in~\eqref{eq:ema-expanded} makes the entanglement explicit: each summand is a single expectation of $\lambda_i\lambda_j\,\mathbf{g}_i\odot\mathbf{g}_j$ taken jointly. Adam cannot factor this expectation into separate preference and gradient EMAs, so the current weights $\lambda_i^{(t)}\lambda_j^{(t)}$ are subsumed into the history-averaged statistic $\mathbb{E}[\lambda_i\lambda_j]$, which we refer to as the \emph{weight marginalization}. In loss-balancing solvers~\citep{sener2018multi,navon2022multi,liu2024famo,lin2024smooth}, each $\boldsymbol{\lambda}^{(t)}$ encodes a deliberate re-weighting that Adam's EMA absorbs into a history-averaged $\mathbb{E}[\lambda_i\lambda_j]$, dampening the strategy in the actual update. In preference-based Pareto methods~\citep{lin2019pareto,ruchte2021scalable,lin2025palora,chen2024ferero} that sweep $\boldsymbol{\lambda}^{(t)}$ within a single run to approximate Pareto front, the marginalization collapses the executed geometry toward a near-uniform mixture, lowering the diversity of Pareto trade-offs.

\paragraph{Source 2: Geometric Mismatch.}

We now turn to general linear scalarization. The linear scalarization in Definition~\ref{def:moo-solver} treats $\mathbf{d}^{(t)}$ as a direction under the Euclidean metric. In practice, however, when Adam is used as the optimizer, the step is executed under Adam's diagonal RMS metric $\mathbf{M}_{\mathrm{Adam}}^{(t)}$~\eqref{eq:adam-metric}, which differs from the Euclidean identity. We now quantify how this discrepancy distorts the per-objective loss change realized by each step.

\begin{proposition}[Per-objective Loss Change with Adam]\label{prop:adam-geometry}
Let $\mathbf{d}^{(t)}$ be the reconciled direction supplied to Adam and $\hat{\mathbf{m}}^{(t)}$ Adam's bias-corrected first-moment EMA of its input. Under the first-order Taylor approximation $\Delta\ell_i^{(t)} \approx \mathbf{g}_i^{(t)\top}\Delta\boldsymbol{\theta}^{(t)}$, the per-objective loss change realized by Adam is
\begin{equation}\label{eq:actual-effect}
    \Delta\ell_i^{(t)}
    \approx -\eta\,\mathbf{g}_i^{(t)\top}\bigl(\mathbf{M}_{\mathrm{Adam}}^{(t)}\bigr)^{-1}\hat{\mathbf{m}}^{(t)}.
\end{equation}
Under the stationarity approximation $\hat{\mathbf{m}}^{(t)}\approx\mathbf{d}^{(t)}$, this reduces to $-\eta\,\mathbf{g}_i^{(t)\top}(\mathbf{M}_{\mathrm{Adam}}^{(t)})^{-1}\mathbf{d}^{(t)}$.
\end{proposition}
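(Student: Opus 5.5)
The argument is a direct substitution: take Adam's update rule in its preconditioned form, Eq.~\eqref{eq:adam-metric}, and insert the resulting parameter displacement into the first-order Taylor model of each per-objective loss.

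\textbf{Step 1 (the displacement).} From Eq.~\eqref{eq:adam-metric} the displacement at iteration $t$ is
\[
\Delta\boldsymbol{\theta}^{(t)} := \boldsymbol{\theta}^{(t+1)}-\boldsymbol{\theta}^{(t)} = -\eta\,\bigl(\mathbf{M}_{\mathrm{Adam}}^{(t)}\bigr)^{-1}\hat{\mathbf{m}}^{(t)}.
\]
This matches Eq.~\eqref{eq:adam_update} coordinatewise, because $\mathbf{M}_{\mathrm{Adam}}^{(t)}$ is diagonal with entries $\sqrt{\hat v_k^{(t)}}+\varepsilon$. These entries are strictly positive since $\varepsilon>0$, so the inverse is well defined.

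\textbf{Step 2 (the Taylor model).} Expand $\ell_i$ around $\boldsymbol{\theta}^{(t)}$:
\[
\ell_i(\boldsymbol{\theta}^{(t+1)})-\ell_i(\boldsymbol{\theta}^{(t)}) = \mathbf{g}_i^{(t)\top}\Delta\boldsymbol{\theta}^{(t)} + O\bigl(\|\Delta\boldsymbol{\theta}^{(t)}\|^2\bigr).
\]
Assuming $\ell_i$ is twice differentiable, the remainder is $O(\eta^2)$. Substituting the displacement from Step~1 gives Eq.~\eqref{eq:actual-effect}.

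\textbf{Step 3 (the stationary reduction).} Under the stationarity approximation $\hat{\mathbf{m}}^{(t)}\approx\mathbf{d}^{(t)}$, replace $\hat{\mathbf{m}}^{(t)}$ by $\mathbf{d}^{(t)}$. The map $\mathbf{x}\mapsto\mathbf{g}_i^{(t)\top}(\mathbf{M}_{\mathrm{Adam}}^{(t)})^{-1}\mathbf{x}$ is linear and bounded. Its operator norm is at most $\|\mathbf{g}_i^{(t)}\|/\varepsilon$, because every diagonal entry of $\mathbf{M}_{\mathrm{Adam}}^{(t)}$ is at least $\varepsilon$. Hence the error introduced is controlled by $\|\hat{\mathbf{m}}^{(t)}-\mathbf{d}^{(t)}\|$, and the reduced form $-\eta\,\mathbf{g}_i^{(t)\top}(\mathbf{M}_{\mathrm{Adam}}^{(t)})^{-1}\mathbf{d}^{(t)}$ follows.

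\textbf{Main obstacle.} The algebra is routine. The only real subtlety is stating precisely what ``$\approx$'' means in each step, since the proposition combines two approximations. The Taylor approximation is exact to first order in $\eta$. The moment approximation is heuristic and holds only in the stationary regime already invoked around Eq.~\eqref{eq:adam-metric}. I would state each separately with its error term rather than try to bound them jointly.

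For context, in the Euclidean case $\mathbf{M}=\mathbf{I}$ the same formula gives $-\eta\,\mathbf{g}_i^\top\mathbf{d}$. This quantity can differ in sign from the Adam version, which is the source of the geometric mismatch the proposition is meant to expose.
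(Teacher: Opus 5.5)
Your proposal is correct and matches the paper's proof: you write the Adam step in diagonal form as $\Delta\boldsymbol{\theta}^{(t)} = -\eta\,(\mathbf{M}_{\mathrm{Adam}}^{(t)})^{-1}\hat{\mathbf{m}}^{(t)}$ and substitute it into the first-order Taylor model $\Delta\ell_i^{(t)}\approx\mathbf{g}_i^{(t)\top}\Delta\boldsymbol{\theta}^{(t)}$. Your explicit error terms are a small, valid refinement: the $O(\eta^2)$ Taylor remainder, and the $\|\mathbf{g}_i^{(t)}\|/\varepsilon$ operator-norm bound controlling the replacement $\hat{\mathbf{m}}^{(t)}\to\mathbf{d}^{(t)}$, which the paper invokes without quantifying.
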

\begin{proof}
See App.~\ref{app:proof-prop-adam-geometry}.
\end{proof}
Eq.~\eqref{eq:actual-effect} expresses the per-objective loss change as a bilinear form in the per-objective gradient $\mathbf{g}_i^{(t)}$ and the first-moment direction $\hat{\mathbf{m}}^{(t)}$ (which coincides with the reconciled $\mathbf{d}^{(t)}$ under stationarity), preconditioned by $(\mathbf{M}_{\mathrm{Adam}}^{(t)})^{-1}$. For $C=1$, $\mathbf{g}_i^{(t)}\!=\!\mathbf{d}^{(t)}$ and the form collapses to the quadratic $-\eta\,\mathbf{d}^{(t)\top}(\mathbf{M}_{\mathrm{Adam}}^{(t)})^{-1}\mathbf{d}^{(t)}\!\leq\!0$, guaranteeing descent for any positive-definite metric. For $C>1$, however, $\mathbf{g}_i^{(t)}\!\neq\!\mathbf{d}^{(t)}$ in general and the bilinear form is indefinite: $(\mathbf{M}_{\mathrm{Adam}}^{(t)})^{-1}$ can flip the sign of $\mathbf{g}_i^{(t)\top}(\mathbf{M}_{\mathrm{Adam}}^{(t)})^{-1}\mathbf{d}^{(t)}$ relative to $\mathbf{g}_i^{(t)\top}\mathbf{d}^{(t)}$, converting an aligned objective into an apparent conflict (App.~\ref{app:sign-flip-geometric}). Fig.~\ref{fig:mismatch-side-by-side} (right) instantiates this regime on a synthetic two-task problem: Adam's preconditioner drives task~1's signed effect $\mathbf{g}_1^{(t)\top}(\mathbf{M}_{\mathrm{Adam}}^{(t)})^{-1}\mathbf{d}^{(t)}$ strongly negative, producing a large positive $\Delta\ell_1^{(t)}$: the update actively increases the loss the solver intended to decrease.

\subsection{Q1: The Correct Curvature for MOO}\label{sec:q1}

As we have discussed above, 
Adam's EMA averages the entangled product of $\boldsymbol{\lambda}^{(t)}$ with the per-objective gradients inside $\mathbf{d}^{(t)}$, rather than the two factors separately. 
Our proposed remedy is to accumulate gradient cross-moments independently and combine them with the current preference at each step, instead of letting a joint EMA absorb the mixing. The resulting curvature estimate is
\begin{equation}\label{eq:scalarized-curvature}
    \begin{split}
     \mathbf{C}_{\boldsymbol{\lambda}^{(t)}}
     = \mathbb{E}\!\left[\mathbf{g}_{\boldsymbol{\lambda}^{(t)}} \odot \mathbf{g}_{\boldsymbol{\lambda}^{(t)}}\right]
    = \sum_{i,j=1}^{C}\lambda_i^{(t)}\lambda_j^{(t)}\,\mathbf{F}_{ij} = \sum_{i=1}^{C} \bigl(\lambda_i^{(t)}\bigr)^2\, \mathbf{F}_{ii} \;+\; \sum_{i \neq j} \lambda_i^{(t)}\lambda_j^{(t)}\, \mathbf{F}_{ij},
    \end{split}
\end{equation}
the diagonal second moment of the scalarized gradient at the current preference $\boldsymbol{\lambda}^{(t)}$. The expectation parallels Adam's $\mathbb{E}[\mathbf{d}^{(t)}\odot\mathbf{d}^{(t)}]$ in~\eqref{eq:ema-expanded} but holds $\boldsymbol{\lambda}^{(t)}$ fixed at the current step, so the deterministic weights factor out and only the gradient cross-moments remain inside the expectation. The second line separates the diagonal terms $\mathbf{F}_{ii}$ (per-objective curvature) from the off-diagonal terms $\mathbf{F}_{ij}$ (cross-objective gradient correlation).

Two properties confirm $\mathbf{C}_{\boldsymbol{\lambda}^{(t)}}$ as the appropriate metric for the current trade-off. \textbf{(i)~Current-preference alignment.} The weights $\lambda_i^{(t)}\lambda_j^{(t)}$ appear outside the expectation and $\mathbf{F}_{ij}$ inside, so the current preference directly controls the contribution of each cross-moment rather than being smoothed into a window average. \textbf{(ii)~Cross-objective structure.} The off-diagonal terms $\mathbf{F}_{ij}$ ($i\neq j$) capture coordinate-wise gradient correlation between tasks: positive entries indicate coordinates where tasks cooperate, negative entries indicate conflict. Adam collapses both diagonal and off-diagonal contributions into a single per-coordinate magnitude; $\mathbf{C}_{\boldsymbol{\lambda}^{(t)}}$ keeps them separable and adjustable as $\boldsymbol{\lambda}^{(t)}$ changes. We define the corresponding preference-conditioned metric as
\begin{equation}\label{eq:target-metric}
    \mathbf{M}_{\boldsymbol{\lambda}^{(t)}}
    := \operatorname{Diag}\!\bigl(\sqrt{\mathbf{C}_{\boldsymbol{\lambda}^{(t)}}} + \varepsilon\bigr).
\end{equation}

\subsection{Q2: Metric Preconditioning of the Solver Output}\label{sec:q2}

Q1 supplies the target metric $\mathbf{M}_{\boldsymbol{\lambda}^{(t)}}$. Eq.~\eqref{eq:actual-effect} offers three intervention points: (i)~modify $\mathbf{g}_i^{(t)}$, but each MOO solver handles per-objective gradients differently, requiring a method-specific modification for each solver; (ii)~replace $\mathbf{M}_{\mathrm{Adam}}^{(t)}$ with $\mathbf{M}_{\boldsymbol{\lambda}^{(t)}}$ in Adam's denominator, but this adjusts only the second-moment EMA while the first-moment EMA remains built from unscaled inputs, leaving the numerator and denominator inconsistent; (iii)~precondition the reconciled direction $\mathbf{d}^{(t)}$, the single output common to all MOO solvers, which propagates the preference correction into both EMAs simultaneously without modifying the solver or the optimizer. 

We adopt approach~(iii) and supply Adam with the preconditioned direction $\mathbf{M}_{\boldsymbol{\lambda}^{(t)}}^{-1}\mathbf{d}^{(t)}$ in place of $\mathbf{d}^{(t)}$. By construction $\mathbf{M}_{\boldsymbol{\lambda}}$ whitens the scalarized gradient, so Adam's second-moment EMA on this input collapses to identity:
\begin{equation}\label{eq:adam-whitened}
    \hat{\mathbf{v}}^{(t)}
    \;\approx\; \mathbb{E}\!\bigl[\mathbf{M}_{\boldsymbol{\lambda}}^{-1}\mathbf{d}^{(t)} \odot \mathbf{M}_{\boldsymbol{\lambda}}^{-1}\mathbf{d}^{(t)}\bigr]
    \;=\; \mathbf{M}_{\boldsymbol{\lambda}}^{-2}\,\mathbf{C}_{\boldsymbol{\lambda}}
    \;\approx\; \mathbf{1},
    \qquad
    \mathbf{M}_{\mathrm{Adam}}^{(t)} \;\approx\; \mathbf{I},
\end{equation}
making Adam's denominator a pass-through. With the preconditioned input $\mathbf{M}_{\boldsymbol{\lambda}^{(t)}}^{-1}\mathbf{d}^{(t)}$ fed to Adam, stationarity gives $\hat{\mathbf{m}}^{(t)}\approx\mathbf{M}_{\boldsymbol{\lambda}^{(t)}}^{-1}\mathbf{d}^{(t)}$ and~\eqref{eq:adam-whitened} gives $\mathbf{M}_{\mathrm{Adam}}^{(t)}\approx\mathbf{I}$; substituting both into~\eqref{eq:actual-effect}:
\begin{equation}\label{eq:cancellation}
    \Delta\ell_i^{(t)}
    \;\approx\; -\eta\,\mathbf{g}_i^{(t)\top}\bigl(\mathbf{M}_{\mathrm{Adam}}^{(t)}\bigr)^{-1}\mathbf{M}_{\boldsymbol{\lambda}^{(t)}}^{-1}\mathbf{d}^{(t)}
    \;\approx\; -\eta\,\mathbf{g}_i^{(t)\top}\mathbf{M}_{\boldsymbol{\lambda}^{(t)}}^{-1}\mathbf{d}^{(t)}.
\end{equation}
The reduction of $\mathbf{M}_{\mathrm{Adam}}^{(t)}$ to the identity does not recover the Euclidean metric: it replaces an uncontrolled, history-marginalized preconditioner with the preference-conditioned $\mathbf{M}_{\boldsymbol{\lambda}^{(t)}}^{-1}$, which is precisely the curvature established in Q1. The realized parameter update is therefore
\begin{equation}\label{eq:ideal-update}
    \boldsymbol{\theta}^{(t+1)}
    \;=\; \boldsymbol{\theta}^{(t)}
    - \eta\,\mathbf{M}_{\boldsymbol{\lambda}^{(t)}}^{-1}\mathbf{d}^{(t)}.
\end{equation}

\subsection{Practical Estimation}\label{sec:practical}

\paragraph{Cross-Objective Fisher Estimation.}
The metric $\mathbf{M}_{\boldsymbol{\lambda}}$ requires the cross-objective Fisher interactions $\mathbf{F}_{ij}$, which are not directly available. We estimate them online with EMAs that share the same decay rate $\beta_2$ as Adam, then assemble and rampup-blend the resulting curvature into the preconditioner:
\begin{equation}\label{eq:fisher-ema}
    \widehat{\mathbf{F}}_{ij}^{(t)}
    = \beta_2\,\widehat{\mathbf{F}}_{ij}^{(t-1)}
    + (1-\beta_2)\,\mathbf{g}_i^{(t)} \odot \mathbf{g}_j^{(t)},
    \qquad 1 \le i \le j \le C,
\end{equation}
\begin{equation}\label{eq:practical-metric}
    \widehat{\mathbf{C}}_{\boldsymbol{\lambda}}^{(t)}
    := \sum_{i,j=1}^{C} \lambda_i^{(t)}\lambda_j^{(t)}\,\widehat{\mathbf{F}}_{ij}^{(t)},
    \qquad
    \widehat{\mathbf{M}}_{\boldsymbol{\lambda}}^{(t)}
    := \operatorname{Diag}\!\Bigl(\alpha^{(t)}\sqrt{\widehat{\mathbf{C}}_{\boldsymbol{\lambda}}^{(t)} + \varepsilon} + (1-\alpha^{(t)})\Bigr),
\end{equation}
where the rampup coefficient $\alpha^{(t)}\in[0,1]$ rises monotonically from $0$ to $1$ over a fixed warmup window, interpolating the denominator between identity (Adam on $\mathbf{d}^{(t)}$) and the full \method{} preconditioner. The rampup defers reliance on $\widehat{\mathbf{F}}_{ij}^{(t)}$ until its EMA has stabilized, since early estimates are dominated by noise. Replacing Eq.~\eqref{eq:ideal-update} $\mathbf{M}_{\boldsymbol{\lambda}}$ with $\widehat{\mathbf{M}}_{\boldsymbol{\lambda}}^{(t)}$ yields the practical algorithm.

\paragraph{Stochastic Pair Sampling.}
Maintaining the $C(C+1)/2$ symmetric EMA entries~\eqref{eq:fisher-ema} requires the per-objective gradient $\mathbf{g}_i^{(t)}$ for every task, incurring $O(C)$ backward passes per step. We reduce this to a constant by drawing a pair $(i,j)\sim\mathrm{Uniform}([C]\times[C])$ at each iteration, computing only $\mathbf{g}_i^{(t)}$ and $\mathbf{g}_j^{(t)}$, and updating the three EMAs that depend on the pair, namely $\widehat{\mathbf{F}}_{ii}^{(t)}$, $\widehat{\mathbf{F}}_{jj}^{(t)}$, and $\widehat{\mathbf{F}}_{ij}^{(t)}$ (with $\widehat{\mathbf{F}}_{ji}$ identified by symmetry); all other entries are retained from step $t{-}1$. The MOO solver is unchanged and operates on the full task set, computing $\mathbf{d}^{(t)}$ as in Definition~\ref{def:moo-solver}. This reduces to $O(1)$ backward passes, twice per step, regardless of $C$. 

\begin{algorithm}[t]
\caption{\method{}: Metric-Aware Multi-Objective Adam}
\label{alg:madam}
\begin{algorithmic}[1]
    \Require MOO solver, $C$ tasks, initial $\boldsymbol{\theta}^{(0)}$, Adam hyperparameters $\eta,\beta_1,\beta_2,\varepsilon$
    \State Initialize Adam state; $\widehat{\mathbf{F}}_{ij}\gets\mathbf{0}$ for $1\le i\le j\le C$
    \For{$t = 1, 2, \dots$}
        \State Receive preference $\boldsymbol{\lambda}^{(t)}\in\Delta^{C-1}$
        \State Sample pair $(i,j)\sim\mathrm{Uniform}([C]\times[C])$ and compute $\mathbf{g}_i^{(t)},\mathbf{g}_j^{(t)}$
        \State Run MOO solver on the full task set to obtain reconciled direction $\mathbf{d}^{(t)}$
        \State \textcolor{blue!80!black}{Update $\widehat{\mathbf{F}}_{ii}^{(t)}, \widehat{\mathbf{F}}_{jj}^{(t)}, \widehat{\mathbf{F}}_{ij}^{(t)}$ via~\eqref{eq:fisher-ema} (with $\widehat{\mathbf{F}}_{ji}^{(t)}=\widehat{\mathbf{F}}_{ij}^{(t)}$)} \hfill \textcolor{blue!80!black}{\textit{// step 1: Fisher EMAs}}
        \State \textcolor{blue!80!black}{Form $\widehat{\mathbf{C}}^{(t)}, \widehat{\mathbf{M}}^{(t)}$ via~\eqref{eq:practical-metric}} \hfill \textcolor{blue!80!black}{\textit{// step 2: assemble metric}}
        \State \textcolor{blue!80!black}{$\tilde{\mathbf{d}}^{(t)} \gets \bigl(\widehat{\mathbf{M}}^{(t)}\bigr)^{-1}\mathbf{d}^{(t)}$} \hfill \textcolor{blue!80!black}{\textit{// step 3: precondition direction}}
        \State $\boldsymbol{\theta}^{(t+1)} \gets \mathrm{Adam}\!\bigl(\tilde{\mathbf{d}}^{(t)}\bigr)$ \hfill // standard Adam on preconditioned direction
    \EndFor
\end{algorithmic}
\end{algorithm}

\section{Experiments}
\label{sec:experiments}

\paragraph{Benchmarks.}
We evaluate \method{} across four MOO regimes that probe distinct sources of preconditioning mismatch.
\textbf{Multi-task learning (MTL)} on $\mathtt{MultiMNIST}$~\citep{lecun1998mnist}, $\mathtt{SARCOS}$~\citep{vijayakumar2000locally}, $\mathtt{UTKFace}$~\citep{zhang2017age}, $\mathtt{Cityscapes}$~\citep{cordts2016cityscapes}, and $\mathtt{NYUv2}$~\citep{silberman2012indoor}, spanning $C\!\in\![2,7]$ tasks across classification, regression, dense prediction, and surface-normal estimation.
\textbf{Pareto MTL} on the same five datasets under preference-conditioned, time-varying $\boldsymbol{\lambda}^{(t)}$, where the goal is to trace the Pareto front rather than reach a single trade-off.
\textbf{Physics-informed Neural Networks (PINNs)} on the PINNacle benchmark~\citep{hao2024pinnacle} with $20$ PDEs whose residual / boundary / initial-condition losses live on widely different scales.
\textbf{Medical image analysis (MIA)} on $\mathtt{ISIC2018}$~\citep{codella2019skin} skin-lesion segmentation, where soft-Dice and pixel-wise cross-entropy form the two objectives, and a super-resolution task on $\mathtt{OASIS3}$~\citep{lamontagne2019oasis}, where a $4\times$ downsampled image is reconstructed to its original resolution using multiple reconstruction objectives. Per-dataset multi-objectives, backbones, losses, and metrics are in App.~\ref{app:experiment-setup}.
\paragraph{Baselines vs. Baselines + \method{}.}
 We apply \method{} on top of representative MOO solvers and training recipes from each regime, holding all other settings fixed within a baseline/\method{} block. For both MTL and Pareto MTL we follow PaLoRA's recipe (learning rate, backbone, batch size, schedule, per-objective losses) and compare against the standard MTL solvers linear scalarization (LS), Uncertainty Weighting (UW)~\citep{kendall2018uncertainty}, Dynamic Weight Average (DWA)~\citep{liu2019dwa}, Impartial MTL (IMTL)~\citep{liu2021imtl}, and CAGrad~\citep{liu2021conflict} for the single-trade-off setting, and against the Pareto-front solvers PaMaL~\citep{dimitriadis2023pamal} and PaLoRA~\citep{lin2025palora} for the preference-conditioned setting. For PINNs we span PINNacle's three baseline families: Vanilla (PINN), Loss Reweighting/Sampling (LRA~\citep{wang2021understanding}, NTK~\citep{wang2022ntk}, RAR~\citep{wu2023comprehensive}), and adaptive activations (LAAF/GAAF~\citep{jagtap2020adaptive}).  For skin-lesion segmentation  we compare \method{} against Adam under UNet~\citep{ronneberger2015u} and SwinTransformer~\citep{liu2021swin}. For super-resolution we compare against UW and LS under Adam.  All experiments are implemented in PyTorch~\citep{paszke2019pytorch} with Nvidia A6000 GPU.

\paragraph{MTL results}
On $\mathtt{MultiMNIST}$ and $\mathtt{UTKFace}$ (Tab.~\ref{tab:multimnist-mtl}), $\mathtt{SARCOS}$ (Tab.~\ref{tab:sarcos-mtl}), $\mathtt{Cityscapes}$ (Tab.~\ref{tab:cityscapes-mtl}), and $\mathtt{NYUv2}$ (Tab.~\ref{tab:nyuv2-mtl} in App.~\ref{app:nyuv2-mtl}). \method{} improves the base solver in aggregate on every benchmark: it strictly dominates all three task accuracies on $\mathtt{MultiMNIST}$, the majority of per-joint errors on $\mathtt{SARCOS}$, both segmentation and depth on $\mathtt{Cityscapes}$—where it reduces IMTL relative depth error from $74.2$ to $61.8$—and the majority of metrics on $\mathtt{NYUv2}$ for LS, DWA, IMTL, and CAGrad. The remaining exceptions arise when the base solver already supplies strong task balancing, and the residual gaps lie within one standard deviation. These results indicate that \method{} acts as a drop-in preconditioner that does not degrade average task performance on any benchmark considered.

\begin{table}[!t]
\centering
\small
\setlength{\tabcolsep}{4pt}
\caption{$\mathtt{MultiMNIST}$ and $\mathtt{UTKFace}$ MTL results (mean $\pm$ std over three seeds). Bold marks the better value within each \mbox{Base/+\method{}} pair. $\mathtt{MultiMNIST}$ rows are top-1 accuracy ($\uparrow$); $\mathtt{UTKFace}$ Age is MAE in years ($\downarrow$), Gender/Race are accuracy ($\uparrow$).}
\label{tab:multimnist-mtl}
\label{tab:utkface-mtl}
\resizebox{\linewidth}{!}{%
\begin{tabular}{lcGcGcGcGcG}
\toprule
& \multicolumn{2}{c}{LS} & \multicolumn{2}{c}{DWA~\citep{liu2019dwa}} & \multicolumn{2}{c}{UW~\citep{kendall2018uncertainty}} & \multicolumn{2}{c}{IMTL~\citep{liu2021imtl}} & \multicolumn{2}{c}{CAGrad~\citep{liu2021conflict}} \\
\cmidrule(lr){2-3}\cmidrule(lr){4-5}\cmidrule(lr){6-7}\cmidrule(lr){8-9}\cmidrule(lr){10-11}
& Base & +\method{} & Base & +\method{} & Base & +\method{} & Base & +\method{} & Base & +\method{} \\
\midrule
\multicolumn{11}{l}{\textit{$\mathtt{MultiMNIST}$}} \\
Avg.\ acc.\
  & 94.08{\scriptsize$\pm$0.37} & \textbf{95.11{\scriptsize$\pm$0.32}}
  & 94.08{\scriptsize$\pm$0.34} & \textbf{95.08{\scriptsize$\pm$0.25}}
  & 94.34{\scriptsize$\pm$0.31} & \textbf{95.14{\scriptsize$\pm$0.44}}
  & 93.95{\scriptsize$\pm$0.29} & \textbf{95.12{\scriptsize$\pm$0.25}}
  & 94.05{\scriptsize$\pm$0.24} & \textbf{95.10{\scriptsize$\pm$0.35}} \\
TL acc.\
  & 94.77{\scriptsize$\pm$0.34} & \textbf{95.76{\scriptsize$\pm$0.38}}
  & 94.71{\scriptsize$\pm$0.36} & \textbf{95.75{\scriptsize$\pm$0.31}}
  & 95.12{\scriptsize$\pm$0.46} & \textbf{95.93{\scriptsize$\pm$0.51}}
  & 94.76{\scriptsize$\pm$0.26} & \textbf{95.83{\scriptsize$\pm$0.20}}
  & 94.86{\scriptsize$\pm$0.23} & \textbf{95.77{\scriptsize$\pm$0.28}} \\
BR acc.\
  & 93.40{\scriptsize$\pm$0.75} & \textbf{94.47{\scriptsize$\pm$0.27}}
  & 93.45{\scriptsize$\pm$0.74} & \textbf{94.42{\scriptsize$\pm$0.19}}
  & 93.57{\scriptsize$\pm$0.86} & \textbf{94.35{\scriptsize$\pm$0.38}}
  & 93.15{\scriptsize$\pm$0.68} & \textbf{94.40{\scriptsize$\pm$0.33}}
  & 93.24{\scriptsize$\pm$0.71} & \textbf{94.42{\scriptsize$\pm$0.42}} \\
\midrule
\multicolumn{11}{l}{\textit{$\mathtt{UTKFace}$}} \\
Age MAE
  & 9.14{\scriptsize$\pm$0.38} & \textbf{8.77{\scriptsize$\pm$0.25}}
  & 9.70{\scriptsize$\pm$1.03} & \textbf{8.91{\scriptsize$\pm$0.19}}
  & 8.81{\scriptsize$\pm$0.42} & \textbf{8.62{\scriptsize$\pm$0.39}}
  & 8.79{\scriptsize$\pm$0.66} & \textbf{8.43{\scriptsize$\pm$0.14}}
  & \textbf{8.28{\scriptsize$\pm$0.14}} & 8.61{\scriptsize$\pm$3.80} \\
Gender Acc
  & 90.76{\scriptsize$\pm$0.26} & \textbf{91.08{\scriptsize$\pm$0.26}}
  & 90.58{\scriptsize$\pm$0.62} & \textbf{90.83{\scriptsize$\pm$0.40}}
  & 91.14{\scriptsize$\pm$0.24} & \textbf{91.45{\scriptsize$\pm$0.24}}
  & 90.94{\scriptsize$\pm$0.21} & \textbf{91.05{\scriptsize$\pm$0.39}}
  & \textbf{90.99{\scriptsize$\pm$0.22}} & 90.42{\scriptsize$\pm$0.51} \\
Race Acc
  & 81.60{\scriptsize$\pm$0.55} & \textbf{81.92{\scriptsize$\pm$0.31}}
  & 81.27{\scriptsize$\pm$0.52} & \textbf{81.52{\scriptsize$\pm$0.34}}
  & 81.22{\scriptsize$\pm$0.45} & \textbf{81.29{\scriptsize$\pm$0.27}}
  & 80.75{\scriptsize$\pm$0.40} & 80.75{\scriptsize$\pm$0.60}
  & 81.34{\scriptsize$\pm$0.36} & \textbf{81.37{\scriptsize$\pm$2.60}} \\
\bottomrule
\end{tabular}
}
\end{table}

\begin{table}[!t]
\centering
\scriptsize
\setlength{\tabcolsep}{3pt}
\caption{$\mathtt{SARCOS}$ MTL results (mean $\pm$ std over three seeds). Errors are scaled by $100$. Bold marks the better value within each \mbox{Base/+\method{}} block.}
\label{tab:sarcos-mtl}
\resizebox{\linewidth}{!}{%
\begin{tabular}{lrrrrrrrr}
\toprule
Method & Avg. err. $\downarrow$ & $t_1$ $\downarrow$ & $t_2$ $\downarrow$ & $t_3$ $\downarrow$ & $t_4$ $\downarrow$ & $t_5$ $\downarrow$ & $t_6$ $\downarrow$ & $t_7$ $\downarrow$ \\
\midrule
LS & 10.99 $\pm$ 0.59 & 1.63 $\pm$ 0.07 & 14.87 $\pm$ 0.15 & 0.65 $\pm$ 0.00 & 0.19 $\pm$ 0.00 & 50.32 $\pm$ 4.08 & 8.96 $\pm$ 0.30 & \textbf{0.30 $\pm$ 0.01} \\
\rlightgray
+ Madam & \textbf{10.00 $\pm$ 0.07} & \textbf{1.46 $\pm$ 0.02} & \textbf{13.55 $\pm$ 0.06} & \textbf{0.61 $\pm$ 0.01} & \textbf{0.19 $\pm$ 0.02} & \textbf{45.43 $\pm$ 0.16} & \textbf{8.47 $\pm$ 0.35} & 0.31 $\pm$ 0.02 \\
\midrule
DWA~\citep{liu2019dwa} & 11.11 $\pm$ 0.65 & 1.64 $\pm$ 0.06 & 14.85 $\pm$ 0.06 & 0.66 $\pm$ 0.01 & 0.19 $\pm$ 0.01 & 51.15 $\pm$ 4.54 & 8.98 $\pm$ 0.10 & 0.31 $\pm$ 0.02 \\
\rlightgray
+ Madam & \textbf{10.09 $\pm$ 0.13} & \textbf{1.47 $\pm$ 0.02} & \textbf{13.57 $\pm$ 0.41} & \textbf{0.62 $\pm$ 0.02} & \textbf{0.19 $\pm$ 0.01} & \textbf{45.94 $\pm$ 0.96} & \textbf{8.51 $\pm$ 0.35} & \textbf{0.30 $\pm$ 0.02} \\
\midrule
UW~\citep{kendall2018uncertainty} & 10.83 $\pm$ 0.71 & 1.45 $\pm$ 0.02 & 14.29 $\pm$ 0.33 & 0.61 $\pm$ 0.01 & 0.18 $\pm$ 0.00 & 50.71 $\pm$ 4.64 & 8.27 $\pm$ 0.14 & 0.30 $\pm$ 0.01 \\
\rlightgray
+ Madam & \textbf{9.92 $\pm$ 0.31} & \textbf{1.33 $\pm$ 0.02} & \textbf{13.19 $\pm$ 0.38} & \textbf{0.59 $\pm$ 0.03} & \textbf{0.17 $\pm$ 0.02} & \textbf{46.14 $\pm$ 1.44} & \textbf{7.73 $\pm$ 0.34} & \textbf{0.29 $\pm$ 0.02} \\
\midrule
IMTL~\citep{liu2021imtl} & 12.10 $\pm$ 0.29 & 0.84 $\pm$ 0.02 & 14.86 $\pm$ 0.16 & 0.39 $\pm$ 0.04 & \textbf{0.08 $\pm$ 0.00} & 60.69 $\pm$ 1.94 & 7.70 $\pm$ 0.19 & 0.15 $\pm$ 0.00 \\
\rlightgray
+ Madam & \textbf{11.05 $\pm$ 0.39} & \textbf{0.81 $\pm$ 0.03} & \textbf{13.32 $\pm$ 0.11} & \textbf{0.34 $\pm$ 0.01} & 0.08 $\pm$ 0.01 & \textbf{55.64 $\pm$ 2.86} & \textbf{7.02 $\pm$ 0.27} & \textbf{0.15 $\pm$ 0.02} \\
\midrule
CAGrad~\citep{liu2021conflict} & 10.98 $\pm$ 0.63 & 1.16 $\pm$ 0.06 & 14.88 $\pm$ 0.53 & 0.66 $\pm$ 0.33 & 0.15 $\pm$ 0.07 & 50.30 $\pm$ 3.63 & 9.53 $\pm$ 0.55 & 0.18 $\pm$ 0.01 \\
\rlightgray
+ Madam & \textbf{10.38 $\pm$ 0.43} & \textbf{1.11 $\pm$ 0.03} & \textbf{13.79 $\pm$ 0.14} & \textbf{0.44 $\pm$ 0.02} & \textbf{0.11 $\pm$ 0.04} & \textbf{47.92 $\pm$ 3.11} & \textbf{9.13 $\pm$ 0.30} & \textbf{0.17 $\pm$ 0.00} \\
\bottomrule
\end{tabular}
}
\end{table}

\begin{table}[!t]
\centering
\small
\setlength{\tabcolsep}{4pt}
\caption{$\mathtt{Cityscapes}$ MTL results (mean over three seeds). Bold marks the better value within each \mbox{Base/+\method{}} pair.}
\label{tab:cityscapes-mtl}
\resizebox{\linewidth}{!}{%
\begin{tabular}{lcGcGcGcGcG|cGcG}
\toprule
& \multicolumn{2}{c}{LS} & \multicolumn{2}{c}{DWA~\citep{liu2019dwa}} & \multicolumn{2}{c}{UW~\citep{kendall2018uncertainty}} & \multicolumn{2}{c}{IMTL~\citep{liu2021imtl}} & \multicolumn{2}{c|}{CAGrad~\citep{liu2021conflict}} & \multicolumn{2}{c}{PaMaL~\citep{dimitriadis2023pamal}} & \multicolumn{2}{c}{PaLoRA~\citep{lin2025palora}} \\
\cmidrule(lr){2-3}\cmidrule(lr){4-5}\cmidrule(lr){6-7}\cmidrule(lr){8-9}\cmidrule(lr){10-11}\cmidrule(lr){12-13}\cmidrule(lr){14-15}
& Base & +\method{} & Base & +\method{} & Base & +\method{} & Base & +\method{} & Base & +\method{} & Base & +\method{} & Base & +\method{} \\
\midrule
mIoU $\uparrow$
  & 70.12 & \textbf{70.89}
  & 70.10 & \textbf{70.86}
  & 70.20 & \textbf{71.12}
  & \textbf{70.77} & 70.49
  & 69.23 & \textbf{70.71}
  & 70.35 & \textbf{70.59}
  & 71.11 & \textbf{71.34} \\
Pix Acc $\uparrow$
  & 91.90 & \textbf{92.14}
  & 91.89 & \textbf{92.18}
  & 91.93 & \textbf{92.26}
  & \textbf{92.12} & 92.02
  & 91.61 & \textbf{91.63}
  & 91.99 & \textbf{92.17}
  & 92.21 & \textbf{92.29} \\
Abs Err $\downarrow$
  & 0.0192 & \textbf{0.0187}
  & 0.0192 & \textbf{0.0184}
  & 0.0189 & \textbf{0.0172}
  & 0.0151 & \textbf{0.0147}
  & 0.0168 & \textbf{0.0165}
  & \textbf{0.0141} & 0.0143
  & \textbf{0.014} & 0.0147 \\
Rel Err $\downarrow$
  & 124.06 & \textbf{123.17}
  & 127.66 & \textbf{120.31}
  & 125.94 & \textbf{116.10}
  & 74.23 & \textbf{61.81}
  & 110.14 & \textbf{101.95}
  & \textbf{54.52} & 54.99
  & \textbf{51.27} & 52.27 \\
\bottomrule
\end{tabular}
}
\end{table}

\paragraph{Pareto MTL results}
On $\mathtt{MultiMNIST}$ (Fig.~\ref{fig:multimnist-pareto-mtl}), $\mathtt{SARCOS}$ (Tab.~\ref{tab:sarcos-pareto-mtl}), $\mathtt{Cityscapes}$ (Tab.~\ref{tab:cityscapes-mtl}), and $\mathtt{NYUv2}$ (Tab.~\ref{tab:nyuv2-mtl}). \method{} raises hypervolume and improves both PaLoRA and PaMaL across all benchmarks: it dominates the base front on $\mathtt{MultiMNIST}$, reduces per-ray error on nearly every $\mathtt{SARCOS}$ ray, improves both segmentation metrics on $\mathtt{Cityscapes}$ with only marginal depth trade-offs, and yields gains on a majority of $\mathtt{NYUv2}$ metrics, uniformly on surface normals. 

\begin{figure}[!t]
\centering
\begin{subfigure}[t]{0.32\linewidth}
\centering
\includegraphics[width=\linewidth]{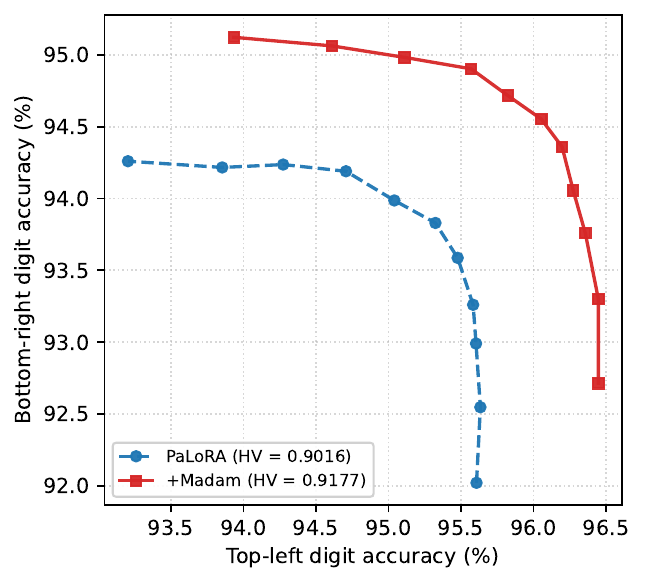}
\label{fig:multimnist-pareto-palora}
\end{subfigure}
\hfill
\begin{subfigure}[t]{0.32\linewidth}
\centering
\includegraphics[width=\linewidth]{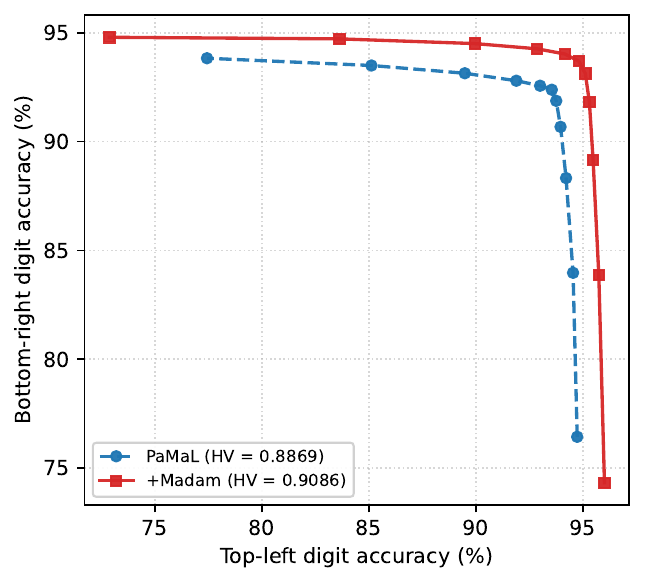}
\label{fig:multimnist-pareto-pamal}
\end{subfigure}
\hfill
\begin{subfigure}[t]{0.32\linewidth}
\centering
\includegraphics[width=\linewidth]{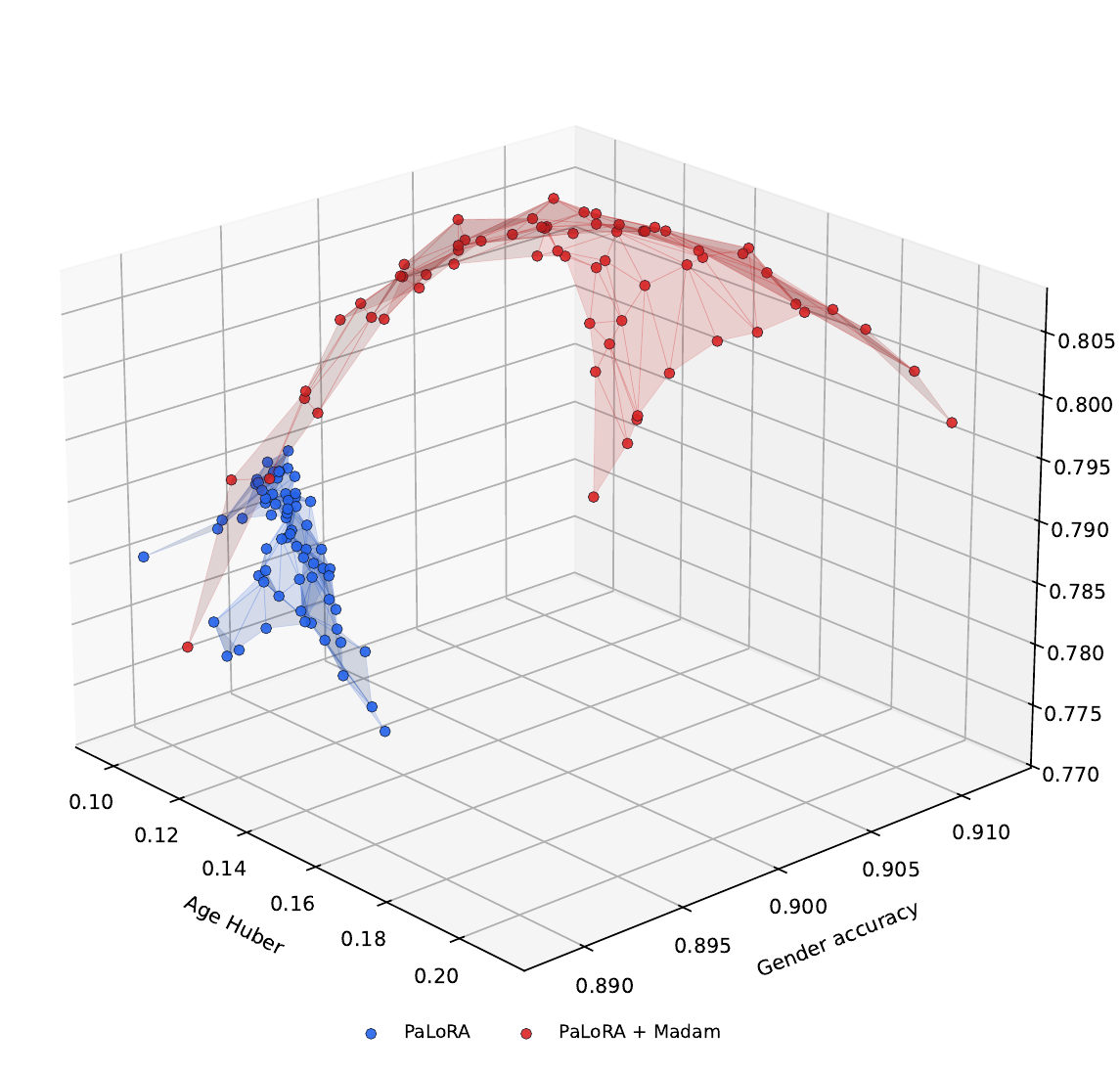}
\label{fig:utkface-pareto-palora}
\end{subfigure}
\caption{Pareto fronts on $\mathtt{MultiMNIST}$ (left, center) and $\mathtt{UTKFace}$ (right). $\mathtt{MultiMNIST}$ shows the two-task accuracy space (top-left vs.\ bottom-right digit) under PaLoRA and PaMaL, evaluated on $11$ uniformly spaced preference rays. $\mathtt{UTKFace}$ shows the $3$-D metric front under PaLoRA. Per-ray per-objective numbers for $\mathtt{MultiMNIST}$ are tabulated in App.~\ref{app:multimnist-pareto-per-objective} (Tab.~\ref{tab:multimnist-pareto-mtl-per-objective}).}
\label{fig:multimnist-pareto-mtl}
\end{figure}

\begin{table}[!t]
\centering
\scriptsize
\setlength{\tabcolsep}{3pt}
\caption{$\mathtt{SARCOS}$ Pareto MTL results (mean $\pm$ std over three seeds). Errors are scaled by $100$; $r_0$--$r_7$ report per-ray average error. Bold marks the better value within each \mbox{Base/+\method{}} block. Per-task per-ray numbers are tabulated in App.~\ref{app:multimnist-pareto-per-objective} (Tabs.~\ref{tab:sarcos-pareto-mtl-palora-per-objective}, \ref{tab:sarcos-pareto-mtl-pamal-per-objective}).}
\label{tab:sarcos-pareto-mtl}
\resizebox{\linewidth}{!}{%
\begin{tabular}{lrrrrrrrrrr}
\toprule
Method & HV $\uparrow$ & Mean $\downarrow$ & $r_0$ $\downarrow$ & $r_1$ $\downarrow$ & $r_2$ $\downarrow$ & $r_3$ $\downarrow$ & $r_4$ $\downarrow$ & $r_5$ $\downarrow$ & $r_6$ $\downarrow$ & $r_7$ $\downarrow$ \\
\midrule
PaMaL~\citep{dimitriadis2023pamal} & 378.2864 $\pm$ 9.2692 & 48.31 $\pm$ 17.53 & 87.86 $\pm$ 87.91 & 41.22 $\pm$ 9.52 & \textbf{34.91 $\pm$ 3.77} & 37.00 $\pm$ 6.33 & 61.62 $\pm$ 40.73 & 40.42 $\pm$ 5.35 & 55.28 $\pm$ 13.61 & 28.19 $\pm$ 0.85 \\
\rlightgray
+ Madam & \textbf{394.5543 $\pm$ 4.1958} & \textbf{38.48 $\pm$ 10.98} & \textbf{32.06 $\pm$ 2.72} & \textbf{33.05 $\pm$ 1.46} & 86.15 $\pm$ 89.88 & \textbf{35.31 $\pm$ 5.35} & \textbf{32.92 $\pm$ 0.52} & \textbf{30.59 $\pm$ 1.23} & \textbf{32.72 $\pm$ 2.08} & \textbf{25.01 $\pm$ 0.74} \\
\midrule
PaLoRA~\citep{lin2025palora} & 431.2050 $\pm$ 8.4737 & 16.62 $\pm$ 3.61 & 16.53 $\pm$ 2.20 & 16.01 $\pm$ 2.37 & 18.31 $\pm$ 7.37 & 15.53 $\pm$ 1.67 & 15.47 $\pm$ 2.02 & 18.86 $\pm$ 7.90 & 18.52 $\pm$ 5.83 & 13.76 $\pm$ 1.19 \\
\rlightgray
+ Madam & \textbf{442.1679 $\pm$ 3.4060} & \textbf{13.91 $\pm$ 1.64} & \textbf{14.83 $\pm$ 2.40} & \textbf{13.96 $\pm$ 2.03} & \textbf{13.53 $\pm$ 2.24} & \textbf{12.45 $\pm$ 0.92} & \textbf{14.05 $\pm$ 1.97} & \textbf{15.28 $\pm$ 1.89} & \textbf{14.88 $\pm$ 4.39} & \textbf{12.31 $\pm$ 0.66} \\
\bottomrule
\end{tabular}
}
\end{table}

\paragraph{PINN results}
Tabs.~\ref{tb:pinn-vanilla}--\ref{tb:pinn-architecture} report mean L2RE on PINNacle for three baseline families, each paired with \method{}: Vanilla (PINN); Loss Reweighting/Sampling (LRA, NTK, RAR); and adaptive activations (LAAF, GAAF). \method{} improves the base solver on most PDEs in every family, with the largest gains on geometrically complex domains, chaotic systems, and high-dimensional Poisson, where residual, boundary, and initial-condition losses span disparate scales; gains compound when \method{} is composed with loss-balancing baselines, while the few cases favoring the base lie in regimes where both saturate near $100\%$ L2RE.

\paragraph{MIA results}
On $\mathtt{OASIS3}$ brain-MRI $4\times$ super-resolution (Fig.~\ref{fig:oasis3-combined}), pairing \method{} with the LS reweighting scheme yields substantial improvements in perceptual fidelity, as measured by LPIPS~\citep{zhang2018unreasonable} and DISTS~\citep{ding2020image}, relative to the L1 baseline. These gains are accompanied by only marginal degradation in pixel-level metrics (PSNR, SSIM, MAE), exemplifying the well-documented perception--distortion tradeoff~\citep{blau2018perception}; qualitatively, this manifests as sharper preservation of fine cortical structure. On $\mathtt{ISIC2018}$ segmentation (Fig.~\ref{fig:oasis3-combined}, bottom-left), \method{} performs on par with Adam under the SwinTransformer backbone, and combining it with RLW under UNet produces the most balanced Dice/HD95 profile among the configurations considered.

\begin{figure}[!t]
\centering
\begin{minipage}[c]{0.36\linewidth}
\centering
{\scriptsize $\mathtt{OASIS3}$ super-resolution metric}\\[2pt]
\scriptsize
\resizebox{\linewidth}{!}{%
\begin{tabular}{@{}lcG@{}}
\toprule
Metric & L1 baseline & LS + \method{} \\
\midrule
LPIPS \down       & \pmv{0.256}{0.019}& \best{\pmv{0.224}{0.020}} \\
DISTS \down       & \pmv{0.217}{0.015}& \best{\pmv{0.165}{0.016}} \\
PSNR (dB) \up     & \best{\pmv{21.50}{0.491}}& \pmv{20.92}{0.479} \\
SSIM \up          & \best{\pmv{0.816}{0.020}}& \pmv{0.803}{0.022} \\
MAE \down         & \best{\pmv{0.0406}{0.0037}}& \pmv{0.0426}{0.0039} \\
\bottomrule
\end{tabular}
}

\vspace{6pt}
{\scriptsize $\mathtt{ISIC2018}$ segmentation: Dice $\uparrow$, HD95 $\downarrow$}\\[2pt]
\resizebox{\linewidth}{!}{%
\begin{tabular}{lcccc}
\toprule
\multirow{2}{*}{Method} & \multicolumn{2}{c}{Swin} & \multicolumn{2}{c}{UNet} \\
\cmidrule(lr){2-3} \cmidrule(lr){4-5}
 & Dice & HD95 & Dice & HD95 \\
\midrule
LS            & 0.8849        & 20.6307        & 0.8775          & 21.1575 \\
\rlightgray
LS + \method{}  & \best{0.8855} & \best{20.1855} & 0.8772          & 22.1048 \\
\rlightgray
RLW~\citep{lin2022reasonable} + \method{} & 0.8807        & 20.8601        & \textbf{0.8821} & \textbf{20.9142} \\
\bottomrule
\end{tabular}
}
\end{minipage}%
\hfill
\begin{minipage}[c]{0.62\linewidth}
\centering
\includegraphics[width=\linewidth]{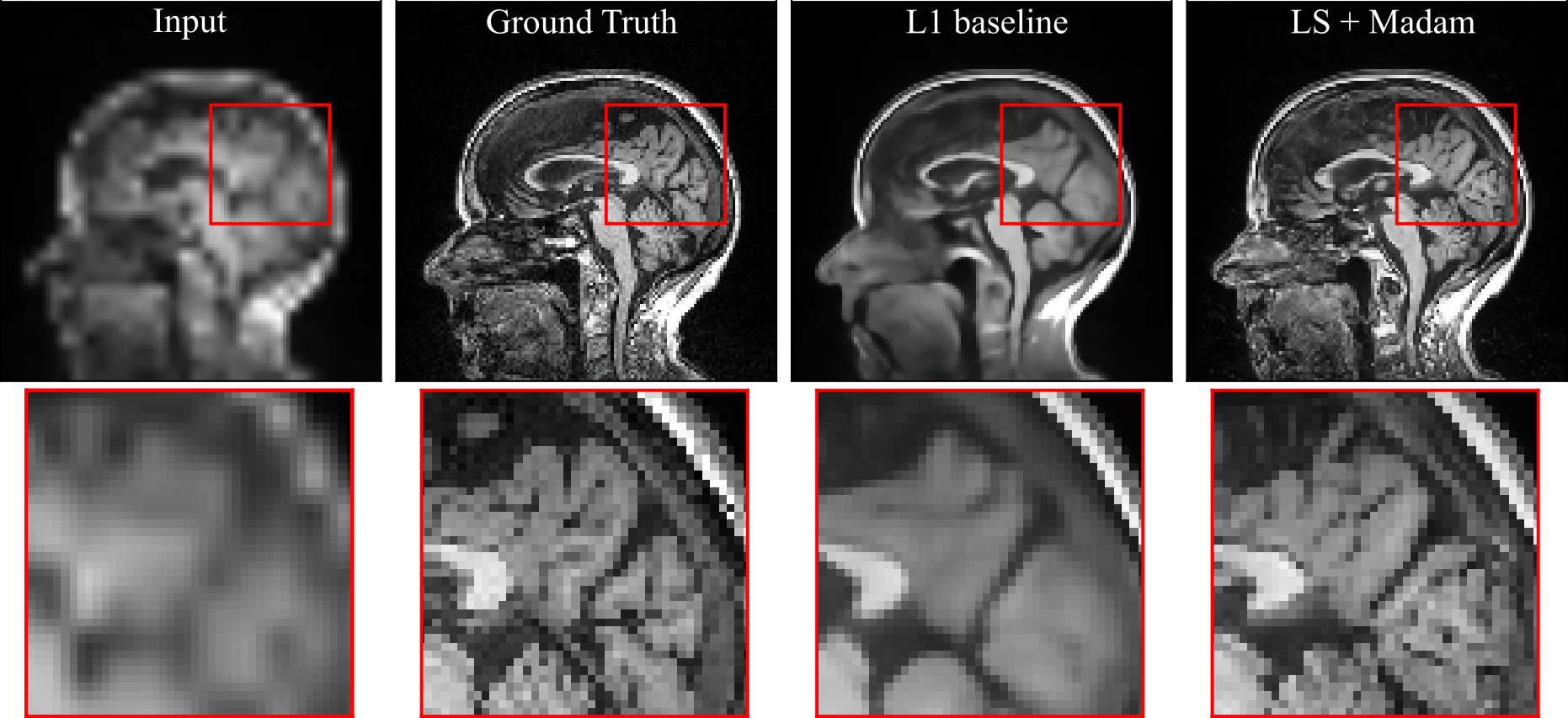}
\end{minipage}
\caption{MIA results. \textbf{Left top:} $\mathtt{OASIS3}$ super-resolution validation metrics (mean $\pm$ std over three seeds; LPIPS~\citep{zhang2018unreasonable} and DISTS~\citep{ding2020image} are perceptual, PSNR/SSIM/MAE are pixelwise; bold marks the better value within each \mbox{Base/+\method{}} block). \textbf{Left bottom:} $\mathtt{ISIC2018}$ skin-lesion validation Dice ($\uparrow$) and HD95 ($\downarrow$) across backbones; bold marks the better value within each \mbox{Adam/+\method{}} block. 
\textbf{Right:} qualitative $\mathtt{OASIS3}$ reconstruction results: the zoom-in region outlined in red highlights LS + \method{} better preserves the cortical folds.}
\label{fig:oasis3-combined}
\end{figure}

\paragraph{Ablation studies}
We deactivate each of \method{}'s three components in turn under LS on $\mathtt{SARCOS}$ (Tab.~\ref{tab:sarcos-ablation}). Disabling the rampup is most damaging: engaging the preconditioner before per-objective Fisher estimates stabilize drops performance below the LS baseline. Full quadratic enumeration over objective pairs matches the full method at quadratic cost, validating stochastic sampling as an unbiased estimator. Restricting to the diagonal Fisher similarly erodes the gain, indicating off-diagonal curvature between conflicting objectives contributes meaningfully. The full method attains the strongest aggregate performance and lowest variance across seeds.

\begin{table}[!t]
\centering
\scriptsize
\setlength{\tabcolsep}{3pt}
\caption{$\mathtt{SARCOS}$ ablation under LS scalarization (mean $\pm$ std over three seeds). Errors are scaled by $100$. Bold marks the best value in each column.}
\label{tab:sarcos-ablation}
\resizebox{\linewidth}{!}{%
\begin{tabular}{lrrrrrrrr}
\toprule
Method & Avg. err. $\downarrow$ & $t_1$ $\downarrow$ & $t_2$ $\downarrow$ & $t_3$ $\downarrow$ & $t_4$ $\downarrow$ & $t_5$ $\downarrow$ & $t_6$ $\downarrow$ & $t_7$ $\downarrow$ \\
\midrule
\rlightgray
LS + \method{} (full)               & \textbf{10.00 $\pm$ 0.07} & 1.46 $\pm$ 0.02 & \textbf{13.55 $\pm$ 0.06} & \textbf{0.61 $\pm$ 0.01} & 0.19 $\pm$ 0.02 & \textbf{45.43 $\pm$ 0.16} & 8.47 $\pm$ 0.35 & 0.31 $\pm$ 0.02 \\
\;\;w/o FIM rampup~\eqref{eq:practical-metric} & 11.74 $\pm$ 0.54 & 1.76 $\pm$ 0.03 & 16.43 $\pm$ 0.97 & 0.72 $\pm$ 0.07 & 0.22 $\pm$ 0.03 & 52.85 $\pm$ 2.27 & 9.88 $\pm$ 0.62 & 0.34 $\pm$ 0.02 \\
\;\;w/o stochastic pair sampling    & 10.33 $\pm$ 0.61 & \textbf{1.45 $\pm$ 0.04} & 14.02 $\pm$ 0.83 & 0.62 $\pm$ 0.03 & \textbf{0.18 $\pm$ 0.01} & 47.44 $\pm$ 4.53 & \textbf{8.32 $\pm$ 0.25} & \textbf{0.30 $\pm$ 0.00} \\
\;\;w/o off-diagonal $\mathbf{F}_{ij}$        & 10.47 $\pm$ 0.70 & 1.50 $\pm$ 0.02 & 14.18 $\pm$ 0.35 & 0.64 $\pm$ 0.03 & 0.18 $\pm$ 0.01 & 48.10 $\pm$ 4.93 & 8.35 $\pm$ 0.25 & 0.30 $\pm$ 0.01 \\
\midrule
LS (baseline)                       & 10.99 $\pm$ 0.59 & 1.63 $\pm$ 0.07 & 14.87 $\pm$ 0.15 & 0.65 $\pm$ 0.00 & 0.19 $\pm$ 0.00 & 50.32 $\pm$ 4.09 & 8.96 $\pm$ 0.30 & 0.30 $\pm$ 0.01 \\
\bottomrule
\end{tabular}
}
\end{table}

\paragraph{Trajectory visualization} We visualize iterate trajectories of Adam and \method{} on a two-dimensional toy problem with matched-scale tilted quadratics whose principal curvatures are opposed, so that the correct local metric rotates with $\boldsymbol{\lambda}$. Fig.~\ref{fig:madam-trajectory} shows that \method{} tracks the rotating stiff direction of the scalarized objective and converges more directly to $\mathbf{x}^\star(\boldsymbol{\lambda})$, whereas Adam locked to the ambient basis and incurs larger mean distance to $\mathbf{x}^\star(\boldsymbol{\lambda})$ across initializations. The full problem setup and quantitative trajectory statistics are in App.~\ref{app:madam-trajectory}.

\section{Conclusion}
\label{sec:conclusion}

We introduced \method{} (Metric-Aware Multi-Objective Adam), a drop-in wrapper that resolves two mismatches in the solver--Adam pipeline: a \emph{weighting mismatch} with Adam's history-averaged second moment, and a \emph{geometric mismatch} with the Euclidean geometry assumed by MOO solvers. Preconditioning the reconciled direction by the preference-conditioned diagonal Fisher collapses Adam's second moment to identity, leaving the update governed by the preference-conditioned metric. Across multi-task learning, Pareto-front recovery, PINNs, and medical imaging, \method{} consistently improves over Adam for diverse MOO solvers.

\paragraph{Limitations and future work.}
Two limitations remain. First, for heterogeneous objectives (e.g., classification paired with regression), our estimate of the off-diagonal blocks $\mathbf{F}_{ij}$ is sub-optimal, with entries that tend to be small and noisy; we plan to develop more principled estimators of cross-objective curvature. Second, our derivation assumes a linear scalarization $\sum_i \lambda_i \mathcal{L}_i$; we plan to extend \method{} to nonlinear scalarizations such as Tchebycheff scalarization~\citep{lin2024smooth}.

\bibliographystyle{abbrvnat}
\bibliography{references}

\appendix
\newpage
\MarkAppendixInTOC
\setcounter{tocdepth}{2}
\AppendixOnlyTOC
\clearpage
\section{Comprehensive Related Work}\label{app:related-work-comprehensive}

\paragraph{Multi-Task and Pareto MTL.}
MTL methods reconcile multiple objectives by reshaping per-objective loss weights, projecting or rotating task gradients, or steering toward a preference-conditioned point on the trade-off front, and then hand the resulting direction to a generic optimizer.
The baselines we compare against in experiments are Linear Scalarization (LS), the equal-weight sum;
Uncertainty Weighting~\citep{kendall2018uncertainty}, which derives weights from homoscedastic noise;
Dynamic Weight Average~\citep{liu2019dwa}, which reweights by recent loss ratios;
IMTL~\citep{liu2021imtl}, which normalizes loss scale (IMTL-L) and per-objective gradient norms (IMTL-G);
CAGrad~\citep{liu2021conflict}, which maximizes the worst-case task improvement;
PaMaL~\citep{dimitriadis2023pamal}, which builds Pareto solutions as preference-weighted ensembles of single-task models;
and PaLoRA~\citep{lin2025palora}, which extends preference conditioning to low-rank adapters so that varying $\boldsymbol{\lambda}$ at inference traces the entire Pareto front.
Beyond these, the broader category includes loss-balancing methods such as GradNorm~\citep{chen2018gradnorm}, FAMO~\citep{liu2024famo}, Smooth Tchebycheff scalarization~\citep{lin2024smooth}, DB-MTL~\citep{lin2023dbmtl}.
gradient-balancing methods such as PCGrad~\citep{yu2020gradient}, GradVac~\citep{wang2021gradvac}, Nash-MTL~\citep{navon2022multi}, RotoGrad~\citep{javaloy2022rotograd}, Aligned-MTL~\citep{senushkin2023aligned}, GradDrop~\citep{soen2024trade}, and TAG~\citep{fifty2021tag};
multi-task optimizer variants MTAdam~\citep{malkiel2021mtadam} and AdaTask~\citep{yang2023adatask};
and additional Pareto-front solvers MGDA~\citep{sener2018multi}, PMTL~\citep{lin2019pareto}, EPO~\citep{mahapatra2020epo}, COSMOS~\citep{ruchte2021scalable}, and FERERO~\citep{chen2024ferero}.
Across all of these, the constructed direction is then committed by Adam, whose diagonal second-moment EMA conflates the time-varying preference $\boldsymbol{\lambda}^{(t)}$ with per-objective gradient statistics and imposes a task-agnostic RMS metric on the executed step, so the solver's prescribed weighting and direction are distorted by the metric of the optimizer that every method here defaults to.

\paragraph{PINN and MIA Loss Balancing.}
Physics-informed neural networks (PINNs) and medical-image analysis (MIA) training confront a closely related problem: residual, boundary, and initial-condition losses in PINNs, and perceptual, structural, and pixel-wise reconstruction losses in MIA, live on widely different scales, so a single objective dominates training unless explicitly rebalanced.
For PINNs we evaluate against the three PINNacle~\citep{hao2024pinnacle} families: the vanilla PINN~\citep{raissi2019physics}, which trains a network to satisfy the PDE residual and boundary/initial-condition losses under uniform weighting;
LRA~\citep{wang2021understanding}, which adapts loss weights from gradient-magnitude statistics;
NTK~\citep{wang2022ntk}, which derives per-loss weights from the Neural Tangent Kernel of each loss term;
RAR~\citep{wu2023comprehensive}, which adaptively resamples residual collocation points by residual magnitude;
and LAAF / GAAF~\citep{jagtap2020adaptive}, which alter expressivity through locally and globally adaptive activation functions rather than weights.
For MIA we compare against vanilla Adam~\citep{paszke2019pytorch} on UNet~\citep{ronneberger2015u} and SwinTransformer~\citep{liu2021swin} backbones, and against RLW~\citep{lin2022reasonable}, which periodically reweights losses by random draws.
Other methods surveyed in PINNacle, including causal training, time-marching schedules, and further sampling strategies, share the same first-order spirit and are not separately enumerated.
These methods correctly diagnose loss-imbalance and address it with first-order remedies (weights, samples, or activations), but none modifies the geometry under which the rebalanced direction is executed; the optimizer's step metric, supplied by Adam, is left untouched even when the loss landscape it defines is the source of the imbalance.

\paragraph{Curvature-Aware Optimizers.}
A separate line of work derives parameter updates from second-order or Fisher information rather than the raw gradient, aiming to accelerate convergence on a single scalar objective by replacing the first-order optimizer altogether.
None of these methods is used as a baseline in our experiments, but they form the closest neighbor to \method{} on the metric axis.
Natural gradient descent~\citep{amari1998natural,martens2020new} preconditions the update by the Fisher information matrix;
K-FAC~\citep{martens2015optimizing} approximates this Fisher in a Kronecker-factored form layer-wise, with EKFAC~\citep{george2018ekfac} extending the factorization to its eigenbasis;
Shampoo~\citep{gupta2018shampoo} maintains full preconditioners along each tensor axis, and SOAP~\citep{vyas2025soap} composes Shampoo's preconditioner with Adam's adaptive scaling;
AdaHessian~\citep{yao2021adahessian} estimates the Hessian diagonal via Hutchinson sampling, and Sophia~\citep{liu2023sophia} clips the diagonal Hessian estimate for stable language-model pretraining;
FAdam~\citep{hwang2024fadam} reinterprets Adam itself as a diagonal empirical-Fisher natural-gradient optimizer, and recent diagonal-Fisher estimators~\citep{soen2024trade,li2025fishers,wu2024improved} sharpen the surrogate Adam already implicitly uses, while \citet{kunstner2019limitations} cautions against treating Adam's denominator as a generic Fisher proxy.
All of these methods target a single scalar objective and intervene by replacing the first-order optimizer with a curvature-aware one; none preconditions the reconciled direction of a multi-objective solver under a time-varying preference.
\method{} differs in both scope and integration: it derives a preference-conditioned diagonal Fisher of the scalarized objective and slots it into the solver-then-Adam pipeline as a wrapper, leaving both the MOO solver and the optimizer untouched.

\clearpage
\section{Proof of Proposition~\ref{prop:adam-geometry}}
\label{app:proof-prop-adam-geometry}

\begin{proof}
Recall from~\eqref{eq:adam-metric} that $\mathbf{M}_{\mathrm{Adam}}^{(t)} = \operatorname{Diag}(\sqrt{\hat{\mathbf{v}}^{(t)}}+\varepsilon)$ is the diagonal RMS metric: its inverse is the inverse-square-root-of-empirical-Fisher preconditioner Adam applies coordinate-wise.
By the first-order Taylor approximation,
\begin{equation}
    \Delta\ell_i^{(t)}
    \approx \mathbf{g}_i^{(t)\top}\Delta\boldsymbol{\theta}^{(t)}.
\end{equation}
From~\eqref{eq:adam_update} and~\eqref{eq:adam-metric}, the parameter step is
\begin{equation}
    \Delta\boldsymbol{\theta}^{(t)}
    = -\eta\,\frac{\hat{\mathbf{m}}^{(t)}}{\sqrt{\hat{\mathbf{v}}^{(t)}}+\varepsilon}
    = -\eta\,\bigl(\mathbf{M}_{\mathrm{Adam}}^{(t)}\bigr)^{-1}\hat{\mathbf{m}}^{(t)}.
\end{equation}
Substituting into the Taylor expansion gives~\eqref{eq:actual-effect}:
\begin{equation}
    \Delta\ell_i^{(t)}
    \approx \mathbf{g}_i^{(t)\top}\!\left(-\eta\,\bigl(\mathbf{M}_{\mathrm{Adam}}^{(t)}\bigr)^{-1}\hat{\mathbf{m}}^{(t)}\right)
    = -\eta\,\mathbf{g}_i^{(t)\top}\bigl(\mathbf{M}_{\mathrm{Adam}}^{(t)}\bigr)^{-1}\hat{\mathbf{m}}^{(t)}.
\end{equation}
The stationarity approximation $\hat{\mathbf{m}}^{(t)}\approx\mathbf{d}^{(t)}$ (stated in the proposition) is used in the subsequent analysis of \S\ref{sec:mismatch} and \S\ref{sec:q2}.
\end{proof}

\clearpage
\section{Sign-Flip Under Adam's Geometric Mismatch}
\label{app:sign-flip}

Sec.~\ref{sec:mismatch} introduces the geometric mismatch (Proposition~\ref{prop:adam-geometry}, Source~2): Adam's diagonal RMS metric $\mathbf{M}_{\mathrm{Adam}}^{(t)}$ distorts the bilinear form $\mathbf{g}_i^\top\mathbf{d}$, so that under the stationarity approximation $\hat{\mathbf{m}}^{(t)}\approx\mathbf{d}^{(t)}$ the per-objective loss change in Eq.~\eqref{eq:actual-effect} can flip from descent ($\Delta\ell_i^{(t)}<0$) to ascent on tasks the solver labeled aligned with $\mathbf{d}$. This appendix proves that mechanism: Lemma~\ref{lem:sign-flip} gives a tight necessary-and-sufficient condition under which a positive-definite diagonal metric flips the sign of $\mathbf{g}_i^\top\mathbf{M}^{-1}\mathbf{d}$ relative to the Euclidean reference $\mathbf{g}_i^\top\mathbf{d}$, expressed as a condition-number trigger $\rho(\mathbf{M})\ge P_i/N_i$ on the per-task alignment ratio. The remark following the lemma instantiates this trigger to Adam, showing that the per-coordinate anisotropy of $\mathbf{M}_{\mathrm{Adam}}^{(t)}$ in trained networks is large enough to satisfy the trigger even for tasks with substantial Euclidean alignment with $\mathbf{d}$.

\subsection{Geometric Sign Flips (Source~2)}\label{app:sign-flip-geometric}

Working under the stationarity approximation $\hat{\mathbf{m}}^{(t)}\approx\mathbf{d}^{(t)}$, the realized per-objective loss change in Proposition~\ref{prop:adam-geometry} reduces to $-\eta\,\mathbf{g}_i^{(t)\top}(\mathbf{M}_{\mathrm{Adam}}^{(t)})^{-1}\mathbf{d}^{(t)}$. The next lemma characterizes exactly when a positive-definite diagonal metric flips this bilinear form's sign relative to the Euclidean reference $\mathbf{g}_i^{(t)\top}\mathbf{d}^{(t)}$.

\begin{lemma}[Sign-Flip Condition]\label{lem:sign-flip}
Let $\mathbf{g}_i, \mathbf{d} \in \mathbb{R}^d$ and let $\mathbf{M} = \operatorname{Diag}(m_1,\ldots,m_d)$ be a positive-definite diagonal metric with condition number $\rho(\mathbf{M}) := m_{\max}/m_{\min}$.
Define the positive and negative parts of the coordinate-wise product:
\begin{equation}
    P_i := \sum_{\substack{k:\, g_{ik}d_k>0}} g_{ik}d_k,
    \qquad
    N_i := \Bigl|\sum_{\substack{k:\, g_{ik}d_k<0}} g_{ik}d_k\Bigr|,
\end{equation}
so that $\mathbf{g}_i^\top\mathbf{d} = P_i - N_i$.
Suppose $\mathbf{g}_i^\top\mathbf{d} > 0$ (task $i$ is aligned with $\mathbf{d}$ in the Euclidean metric) and $N_i > 0$.
There exists a diagonal $\mathbf{M}$ with condition number $\rho$ such that $\mathbf{g}_i^\top\mathbf{M}^{-1}\mathbf{d} \le 0$ if and only if
\begin{equation}\label{eq:sign-flip-condition}
    \rho \;\ge\; \frac{P_i}{N_i}.
\end{equation}
\end{lemma}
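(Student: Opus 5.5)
The plan is to rewrite the bilinear form coordinate-wise and reduce the question to a one-dimensional extremal problem over the diagonal weights. Set $w_k := 1/m_k>0$. Then
\begin{equation*}
\mathbf{g}_i^\top\mathbf{M}^{-1}\mathbf{d}=\sum_{k=1}^{d} w_k\, g_{ik}d_k .
\end{equation*}
The condition number is $\rho(\mathbf{M})=m_{\max}/m_{\min}=w_{\max}/w_{\min}$, so every weight lies in $[w_{\min},\rho\,w_{\min}]$. Coordinates with $g_{ik}d_k=0$ contribute nothing and can be ignored. The hypotheses $\mathbf{g}_i^\top\mathbf{d}>0$ and $N_i>0$ force $P_i>N_i>0$. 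Hence both a positive-product coordinate and a negative-product coordinate exist, and $P_i/N_i>1$.

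\emph{Necessity (lower bound).} Take any diagonal $\mathbf{M}$ with condition number $\rho$. Each positive term satisfies $w_k g_{ik}d_k\ge w_{\min}g_{ik}d_k$. Each negative term satisfies $w_k g_{ik}d_k\ge -w_{\max}|g_{ik}d_k|$. Summing gives
\begin{equation*}
\mathbf{g}_i^\top\mathbf{M}^{-1}\mathbf{d}\;\ge\; w_{\min}P_i-w_{\max}N_i \;=\; w_{\min}\bigl(P_i-\rho N_i\bigr).
\end{equation*}
If $\rho<P_i/N_i$, the right-hand side is strictly positive, so no such $\mathbf{M}$ can flip the sign.

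\emph{Sufficiency (construction).} Suppose $\rho\ge P_i/N_i$. Define the weights as follows:
\begin{itemize}
\item $w_k=1$ (that is, $m_k=1$) on every coordinate with $g_{ik}d_k>0$;
\item $w_k=\rho$ (that is, $m_k=1/\rho$) on every coordinate with $g_{ik}d_k<0$;
\item $w_k=1$ on the remaining coordinates.
\end{itemize}
Both a positive and a negative coordinate exist, so both extreme values are attained and the condition number is exactly $\rho$. The lower bound above then holds with equality:
\begin{equation*}
\mathbf{g}_i^\top\mathbf{M}^{-1}\mathbf{d}=P_i-\rho N_i\le 0 .
\end{equation*}

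The only delicate point is the phrase ``condition number $\rho$''. I read it as exactly $\rho$, and the construction attains that precisely because $P_i,N_i>0$. Under the reading ``at most $\rho$'', the same argument goes through, since the bound $w_{\min}(P_i-\rho N_i)$ is monotone in $\rho$. I expect no real obstacle beyond making this extremal (bang-bang) choice of weights explicit. It is also worth remarking that the threshold is tight: at $\rho=P_i/N_i$ the constructed metric gives exactly zero.
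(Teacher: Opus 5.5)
Your proof is correct and takes essentially the same route as the paper. Necessity uses the same extremal bound $\mathbf{g}_i^\top\mathbf{M}^{-1}\mathbf{d} \ge P_i/m_{\max} - N_i/m_{\min} = (P_i-\rho N_i)/m_{\max}$ (your $w_{\min}(P_i-\rho N_i)$), and sufficiency uses the same bang-bang choice of $m_{\max}$ on positive-product coordinates and $m_{\min}$ on negative-product ones, with your explicit handling of zero-product coordinates and exact attainment of $\rho$ being details the paper leaves implicit.
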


\begin{proof}
We have $\mathbf{g}_i^\top\mathbf{M}^{-1}\mathbf{d} = \sum_k g_{ik}d_k/m_k$.
Partition the coordinates as $\mathcal{I}^+ = \{k : g_{ik}d_k > 0\}$ and $\mathcal{I}^- = \{k : g_{ik}d_k < 0\}$.

\emph{($\Leftarrow$)}\ Suppose $\rho \ge P_i/N_i$.
Set $m_k = m_{\max}$ for $k \in \mathcal{I}^+$ and $m_k = m_{\min}$ for $k \in \mathcal{I}^-$.
Then
\[
    \mathbf{g}_i^\top\mathbf{M}^{-1}\mathbf{d}
    = \frac{P_i}{m_{\max}} - \frac{N_i}{m_{\min}}
    = \frac{1}{m_{\min}}\!\left(\frac{P_i}{\rho} - N_i\right)
    \le 0,
\]
since $P_i/\rho \le N_i$ by assumption.

\emph{($\Rightarrow$)}\ Suppose $\rho < P_i/N_i$.
For any diagonal $\mathbf{M}$ with condition number $\rho$,
\[
    \mathbf{g}_i^\top\mathbf{M}^{-1}\mathbf{d}
    \ge \frac{P_i}{m_{\max}} - \frac{N_i}{m_{\min}}
    = \frac{P_i - \rho\, N_i}{m_{\max}} > 0,
\]
so no sign flip is possible.
\end{proof}

\begin{remark}
The condition $\rho \ge P_i/N_i$ has a geometric interpretation.
The ratio $P_i/N_i$ measures how lopsided the alignment between $\mathbf{g}_i$ and $\mathbf{d}$ is: a large ratio means positive contributions dominate (task $i$ strongly agrees with the reconciled direction), so a proportionally large condition number is needed to flip the sign.
Conversely, when task $i$ is only marginally aligned ($P_i \approx N_i$, i.e.\ $\mathbf{g}_i^\top\mathbf{d} \approx 0$), even a mildly ill-conditioned metric can trigger a flip.
For unit-norm vectors with cosine similarity $c = \mathbf{g}_i^\top\mathbf{d}$, the minimum condition number enabling a sign flip satisfies
\begin{equation}
    \rho_{\min}(c)
    = \frac{P_i}{N_i}
    = \frac{1 + c}{1 - c},
    \qquad
    P_i = \frac{1+c}{2},\;N_i = \frac{1-c}{2},
\end{equation}
achieved when $P_i + N_i = 1$.
Notably, when $\mathbf{g}_i = \mathbf{d}$ (single-objective optimization), $c = 1$ and $\rho_{\min}(1) = \infty$: no finite condition number can flip the sign, consistent with $\mathbf{d}^\top\mathbf{M}^{-1}\mathbf{d} > 0$ for any positive-definite $\mathbf{M}$.
In MOO, $c < 1$ in general, so $\rho_{\min}(c)$ is finite.
In practice, Adam's per-coordinate effective scale $\sqrt{\hat{v}_k}$ is highly anisotropic and routinely spans several orders of magnitude in trained networks~\citep{tong2022calibrating,zhang2024transformers}, so the trigger $\rho \ge P_i/N_i$ is satisfied even for tasks with substantial alignment ($c$ close to~$1$).
\end{remark}

\clearpage
\section{Representative MOO Solvers}
\label{app:moo-solvers}

This appendix walks through representative gradient-based MOO solvers and verifies for each that the reconciled direction fits the linear-scalarization form of Definition~\ref{def:moo-solver},
\begin{equation}\label{eq:appendix-linear-form}
    \mathbf{d}^{(t)} \;=\; \sum_{i=1}^{C}\lambda_i^{(t)}\,\mathbf{g}_i^{(t)},
    \qquad \mathbf{g}_i^{(t)} \;:=\; \nabla_{\boldsymbol{\theta}}\ell_i\!\bigl(\boldsymbol{\theta}^{(t)}\bigr),
\end{equation}
with $\boldsymbol{\lambda}^{(t)}$ detached from $\boldsymbol{\theta}$ in the backward pass. Each per-objective gradient $\mathbf{g}_i^{(t)}$ is, by autodiff, the gradient of $\ell_i$ under the standard Euclidean inner product on $\mathbb{R}^d$. Linearity of the gradient operator with $\boldsymbol{\theta}$-detached coefficients then gives
\begin{equation}\label{eq:appendix-euclidean-gradient}
    \mathbf{d}^{(t)} \;=\; \nabla_{\boldsymbol{\theta}}\,\ell_{\boldsymbol{\lambda}^{(t)}}\!\bigl(\boldsymbol{\theta}^{(t)}\bigr),
    \qquad \ell_{\boldsymbol{\lambda}^{(t)}} \;:=\; \sum_{i=1}^C \lambda_i^{(t)}\ell_i,
\end{equation}
the Euclidean gradient of the scalarized loss on $\mathbb{R}^d$. The three families below differ only in how $\boldsymbol{\lambda}^{(t)}$ is determined; the Euclidean property is inherited uniformly, and so is the geometric mismatch with Adam's diagonal RMS metric (Sec.~\ref{sec:mismatch}).

For the gradient-balancing methods we use the per-objective gradient matrix $\mathbf{G}^{(t)} := [\mathbf{g}_1^{(t)},\ldots,\mathbf{g}_C^{(t)}]\in\mathbb{R}^{d\times C}$ and its Gram $\mathbf{K}^{(t)} := \mathbf{G}^{(t)\top}\mathbf{G}^{(t)}\in\mathbb{R}^{C\times C}$ with $[\mathbf{K}^{(t)}]_{ij} = \mathbf{g}_i^{(t)\top}\mathbf{g}_j^{(t)}$. Each Gram entry is the trace of the cross-objective Fisher diagonal $\mathbf{F}_{ij}^{(t)} := \mathbb{E}[\mathbf{g}_i^{(t)}\odot\mathbf{g}_j^{(t)}]\in\mathbb{R}^d$, namely $[\mathbf{K}^{(t)}]_{ij} = \mathbf{1}^\top\mathbf{F}_{ij}^{(t)}$ in expectation; \method{} retains the full per-coordinate $\mathbf{F}_{ij}^{(t)}$ used to assemble the preference-conditioned metric in Eq.~\eqref{eq:target-metric}.

\subsection{Gradient Balancing}
\label{app:moo-gb}

Gradient-balancing solvers determine $\boldsymbol{\lambda}^{(t)}$ implicitly from the geometry of the per-objective gradients (typically a function of $\mathbf{K}^{(t)}$). The output is a linear combination of $\{\mathbf{g}_i^{(t)}\}$ with coefficients detached from $\boldsymbol{\theta}$.

\paragraph{MGDA~\citep{sener2018multi}.}
Picks $\boldsymbol{\lambda}^{(t)}\in\Delta^{C}$ as the minimum-norm element of the convex hull of the per-objective gradients,
\begin{equation}
    \boldsymbol{\lambda}^{(t)} \;=\; \arg\min_{\boldsymbol{\lambda}\in\Delta^{C}} \boldsymbol{\lambda}^{\top}\mathbf{K}^{(t)}\boldsymbol{\lambda},
\end{equation}
solved as a small QP in $\mathbb{R}^C$. The reconciled direction $\mathbf{d}^{(t)}=\sum_i \lambda_i^{(t)}\mathbf{g}_i^{(t)}$ is by construction~\eqref{eq:appendix-linear-form}; the QP is solved with stop-gradient on $\mathbf{K}^{(t)}$, so $\boldsymbol{\lambda}^{(t)}$ is detached.

\paragraph{PCGrad~\citep{yu2020gradient}.}
For each pair $(i,j)$ with $\mathbf{g}_i^{(t)\top}\mathbf{g}_j^{(t)}<0$, PCGrad replaces $\mathbf{g}_i^{(t)}$ by its projection onto the orthogonal complement of $\mathbf{g}_j^{(t)}$ and sums the projected vectors. Each projection is linear in $\{\mathbf{g}_k^{(t)}\}$ with coefficients drawn from $\mathbf{K}^{(t)}$, so the output collects into~\eqref{eq:appendix-linear-form} with effective coefficients
\begin{equation}
    \lambda_i^{(t)} \;=\; 1 \;-\; \sum_{j\in\mathcal{C}_i^{(t)}} \frac{\mathbf{g}_j^{(t)\top}\mathbf{g}_i^{(t)}}{\bigl\|\mathbf{g}_i^{(t)}\bigr\|^2},
\end{equation}
where $\mathcal{C}_i^{(t)}$ indexes objectives that conflict with task $i$ at step $t$. Coefficients lie in $\mathbb{R}^C$ rather than $\Delta^C$, but the form is still linear scalarization with detached weights.

\paragraph{CAGrad~\citep{liu2021conflict}.}
Solves a trust-region problem around the average gradient $\mathbf{g}_0^{(t)} := \tfrac{1}{C}\sum_i \mathbf{g}_i^{(t)}$, returning
\begin{equation}
    \mathbf{d}^{(t)} \;=\; \mathbf{g}_0^{(t)} + \alpha^{*}\mathbf{G}^{(t)}\boldsymbol{w}^{*},
    \qquad \boldsymbol{w}^{*}\in\Delta^C,
\end{equation}
with $\boldsymbol{w}^{*}$ from a small QP and trust-region radius $\alpha^{*}$. Substituting $\mathbf{g}_0^{(t)} = \mathbf{G}^{(t)}(\mathbf{1}/C)$ shows $\mathbf{d}^{(t)}$ is~\eqref{eq:appendix-linear-form} with effective weights $\lambda_i^{(t)} = 1/C + \alpha^{*}w_i^{*}$.

\paragraph{Other gradient-balancing methods.}
IMTL-G~\citep{liu2021imtl} chooses $\boldsymbol{\lambda}^{(t)}$ in closed form so that the reconciled direction has equal projections onto each unit-normalized $\mathbf{g}_i^{(t)}$. RotoGrad~\citep{javaloy2022rotograd} rotates per-objective feature spaces so that per-objective gradients align in magnitude, then sums the rotated gradients (taking the rotated $\tilde{\mathbf{g}}_i^{(t)}$ as the per-objective gradients leaves the form unchanged). Aligned-MTL~\citep{senushkin2023aligned} rebalances gradients in the eigenbasis of $\mathbf{K}^{(t)}$, and Nash-MTL~\citep{navon2022multi} solves a bargaining QP for $\boldsymbol{\lambda}^{(t)}$. All produce~\eqref{eq:appendix-linear-form} with weights determined by $\mathbf{K}^{(t)}$ and detached from $\boldsymbol{\theta}$.

\subsection{Loss Balancing}
\label{app:moo-lb}

Loss-balancing solvers determine $\boldsymbol{\lambda}^{(t)}$ from the history of scalar losses $\{\ell_i^{(\tau)}\}_{\tau\le t}$ and form $\mathbf{d}^{(t)}$ via~\eqref{eq:appendix-linear-form} with $\boldsymbol{\lambda}^{(t)}$ detached from $\boldsymbol{\theta}$ in the backward pass.

\paragraph{GradNorm~\citep{chen2018gradnorm}.}
Adjusts $\boldsymbol{\lambda}^{(t)}$ by a separate optimization step that drives per-objective gradient norms toward relative training-rate targets. The main update consumes the resulting $\boldsymbol{\lambda}^{(t)}$ as linear-scalarization weights.

\paragraph{FAMO~\citep{liu2024famo}.}
Maintains $\boldsymbol{\lambda}^{(t)} = \mathrm{softmax}(\boldsymbol{z}^{(t)})$ and updates the logits $\boldsymbol{z}^{(t)}$ via a closed-form rule based on per-objective log-loss decrease rates. The reconciled direction is again~\eqref{eq:appendix-linear-form}.

\paragraph{Uncertainty Weighting~\citep{kendall2018uncertainty}.}
Learns per-objective observation noises $\sigma_i$ jointly with $\boldsymbol{\theta}$; the effective per-objective weight at the main step is $\lambda_i^{(t)} = 1/\bigl(2(\sigma_i^{(t)})^2\bigr)$ (modulo regularization terms in the loss). The detached weight feeds into~\eqref{eq:appendix-linear-form}.

\subsection{Preference-Based Pareto}
\label{app:moo-pb}

Preference-based Pareto methods take $\boldsymbol{\lambda}^{(t)}$ as an external input, typically a user preference on the simplex, and use it to steer the iterate toward a particular point on the Pareto front. Sweeping $\boldsymbol{\lambda}^{(t)}$ within a single run (or across runs) traces out the front.

\paragraph{Pareto MTL~\citep{lin2019pareto}.}
A small set of reference rays $\{\boldsymbol{\lambda}^{(k)}\}$ partitions the simplex into cones; for each ray, training is conducted under a constrained scalarized subproblem with $\boldsymbol{\lambda}^{(t)}$ fixed inside the cone. The descent direction at each step is~\eqref{eq:appendix-linear-form} with $\boldsymbol{\lambda}^{(t)}$ supplied by the active ray.

\paragraph{PaLoRA~\citep{lin2025palora}.}
Trains a single conditional model on per-step preferences $\boldsymbol{\lambda}^{(t)}\sim\mathrm{Dir}(\boldsymbol{\alpha})$, mixing per-objective LoRA adapters by $\boldsymbol{\lambda}^{(t)}$. The training loss at each step is $\sum_i\lambda_i^{(t)}\ell_i$, and its gradient on $\mathbb{R}^d$ is~\eqref{eq:appendix-linear-form}.

\paragraph{PaMaL~\citep{dimitriadis2023pamal} and FERERO~\citep{chen2024ferero}.}
Sweep $\boldsymbol{\lambda}^{(t)}$ within a single run to trace the Pareto front via preference-conditioned model surgery. The user-specified $\boldsymbol{\lambda}^{(t)}$ enters the update as the linear-scalarization coefficients of~\eqref{eq:appendix-linear-form}.

\subsection{Common Property: Reconciled Directions Are Euclidean}
\label{app:moo-euclidean}

Across all three families, $\boldsymbol{\lambda}^{(t)}$ is determined by a different mechanism (gradient geometry, loss history, or external preference), but the executed direction is the same object: a linear combination of per-objective Euclidean gradients with $\boldsymbol{\theta}$-detached coefficients, equal to the Euclidean gradient of $\ell_{\boldsymbol{\lambda}^{(t)}}$ on $\mathbb{R}^d$ by~\eqref{eq:appendix-euclidean-gradient}. This is the hypothesis under which Adam's diagonal RMS metric distorts the per-objective effect (Sec.~\ref{sec:mismatch}) and under which \method{}'s preference-conditioned curvature $\mathbf{C}_{\boldsymbol{\lambda}^{(t)}}$ in Eq.~\eqref{eq:scalarized-curvature} is the corresponding Fisher object.

\clearpage
\section{Multi-Phase Weighting Mismatch Visualization}
\label{app:multiphase-weighting}

This appendix expands the left panel of Fig.~\ref{fig:mismatch-side-by-side} into a multi-phase visualization of the weighting mismatch (Source~1, Sec.~\ref{sec:mismatch}). Whereas Fig.~\ref{fig:mismatch-side-by-side} (left) shows a single preference switch $\boldsymbol{\lambda}_A\!\to\!\boldsymbol{\lambda}_B$, here we drive the same 2D problem through six consecutive phases of alternating preferences $\boldsymbol{\lambda}_A\!\leftrightarrow\!\boldsymbol{\lambda}_B$, exposing how Adam's history-averaged second moment compounds across repeated switches while \method{}'s preference-conditioned preconditioner re-aligns the update geometry phase by phase.

\begin{figure}[!t]
    \centering
    \includegraphics[width=\linewidth]{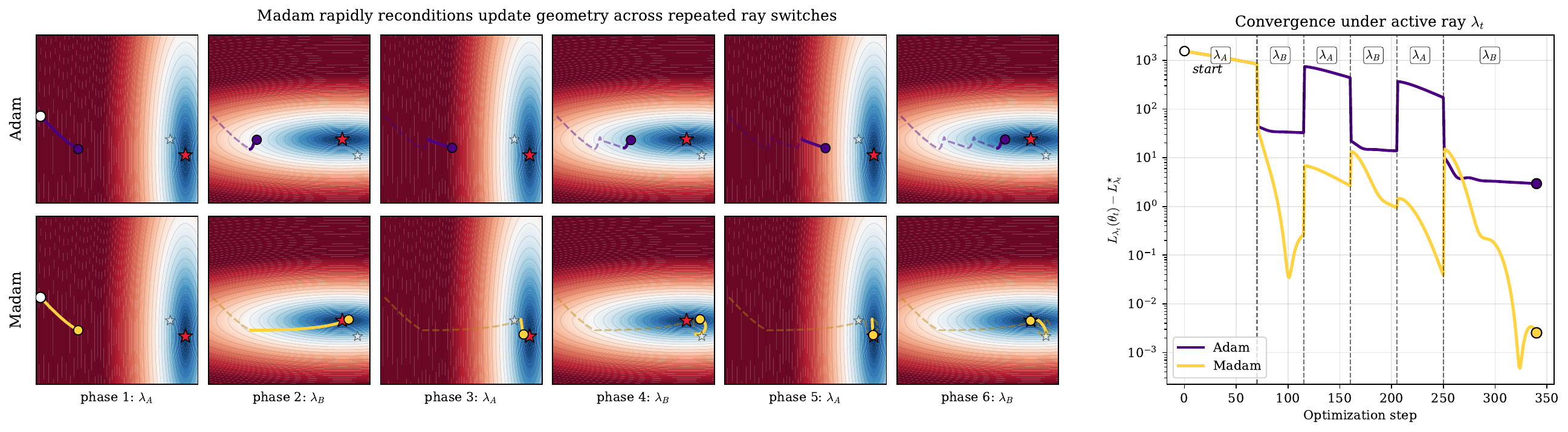}
    \caption{
    \textbf{Multi-phase weighting mismatch: repeated preference switches $\boldsymbol{\lambda}_A\!\leftrightarrow\!\boldsymbol{\lambda}_B$.}
    \emph{Left grid:} decision-space iterates of Adam (top row) and \method{} (bottom row) across six consecutive phases. Each column fixes the active preference $\boldsymbol{\lambda}^{(t)}\in\{\boldsymbol{\lambda}_A,\boldsymbol{\lambda}_B\}$; the background is the corresponding scalarized landscape $\ell_{\boldsymbol{\lambda}^{(t)}}$, the red star marks $\boldsymbol{\theta}^\star_{\boldsymbol{\lambda}^{(t)}}$, and the gray star marks the previous-phase optimum. Adam's iterate (purple) lags behind the active optimum after each switch and accumulates bias as phases alternate; \method{}'s iterate (yellow) re-converges to the active $\boldsymbol{\theta}^\star_{\boldsymbol{\lambda}^{(t)}}$ within each phase.
    \emph{Right:} scalarized loss gap $\ell_{\boldsymbol{\lambda}^{(t)}}(\boldsymbol{\theta}^{(t)})-\ell^\star_{\boldsymbol{\lambda}^{(t)}}$ on the active preference, plotted on a log scale across all six phases. Vertical dashed lines mark phase boundaries. Adam's gap spikes at every switch and plateaus several orders of magnitude above the active optimum, while \method{} drives the gap down in each phase.
    The contrast confirms Proposition~\ref{prop:adam-weighting-mismatch}: Adam's $\hat{\mathbf{v}}^{(t)}$ tracks the weight-marginalized curvature averaged across past preferences, so the realized metric never matches the active $\boldsymbol{\lambda}^{(t)}$; \method{}'s preference-conditioned $\mathbf{C}_{\boldsymbol{\lambda}^{(t)}}$ instantiates the right metric at each step.
    }
    \label{fig:multiphase-weighting}
\end{figure}

\clearpage
\section{\method{} Trajectory Visualization}
\label{app:madam-trajectory}

To make the geometric mismatch of Sec.~\ref{sec:mismatch} concrete at the optimizer level, we visualize \method{} and Adam trajectories on a two-dimensional toy multi-objective problem with a preference-rotating scalarized metric. The two objectives are matched-scale convex quadratics with opposite principal axes:
\begin{equation}
    f_1(\mathbf{x}) = \tfrac{1}{2}(\mathbf{x}-\mathbf{c}_1)^{\!\top} \mathbf{H}_1 (\mathbf{x}-\mathbf{c}_1),
    \quad \mathbf{c}_1 = (1.5,\,0),
\end{equation}
\begin{equation}
    f_2(\mathbf{x}) = \tfrac{1}{2}(\mathbf{x}-\mathbf{c}_2)^{\!\top} \mathbf{H}_2 (\mathbf{x}-\mathbf{c}_2),
    \quad \mathbf{c}_2 = (-1.5,\,0).
\end{equation}
With tilt parameters $\alpha = 2.0$ and $\beta = 0.2$, the Hessians are
\begin{equation}
    \mathbf{H}_1 = \begin{bmatrix} 1.1 & 0.9 \\ 0.9 & 1.1 \end{bmatrix},
    \qquad
    \mathbf{H}_2 = \begin{bmatrix} 1.1 & -0.9 \\ -0.9 & 1.1 \end{bmatrix},
\end{equation}
so that, written out coordinate-wise,
\begin{equation}
    f_1(x_1,x_2) = 0.55\,(x_1-1.5)^2 + 0.9\,(x_1-1.5)\,x_2 + 0.55\,x_2^2,
\end{equation}
\begin{equation}
    f_2(x_1,x_2) = 0.55\,(x_1+1.5)^2 - 0.9\,(x_1+1.5)\,x_2 + 0.55\,x_2^2.
\end{equation}

Both Hessians share the same eigenvalues, so neither task is simply ``larger-scale'' than the other; the difference is purely directional. The objective $f_1$ is stiff along the $(1,1)$ direction, while $f_2$ is stiff along $(1,-1)$. Consequently, as the preference vector $\boldsymbol{\lambda}=(\lambda_1,\lambda_2)$ varies, the scalarized objective $\lambda_1 f_1 + \lambda_2 f_2$ has its active curvature direction rotated, and the correct local metric rotates with it.

\begin{figure}[!t]
    \centering
    \includegraphics[width=\linewidth]{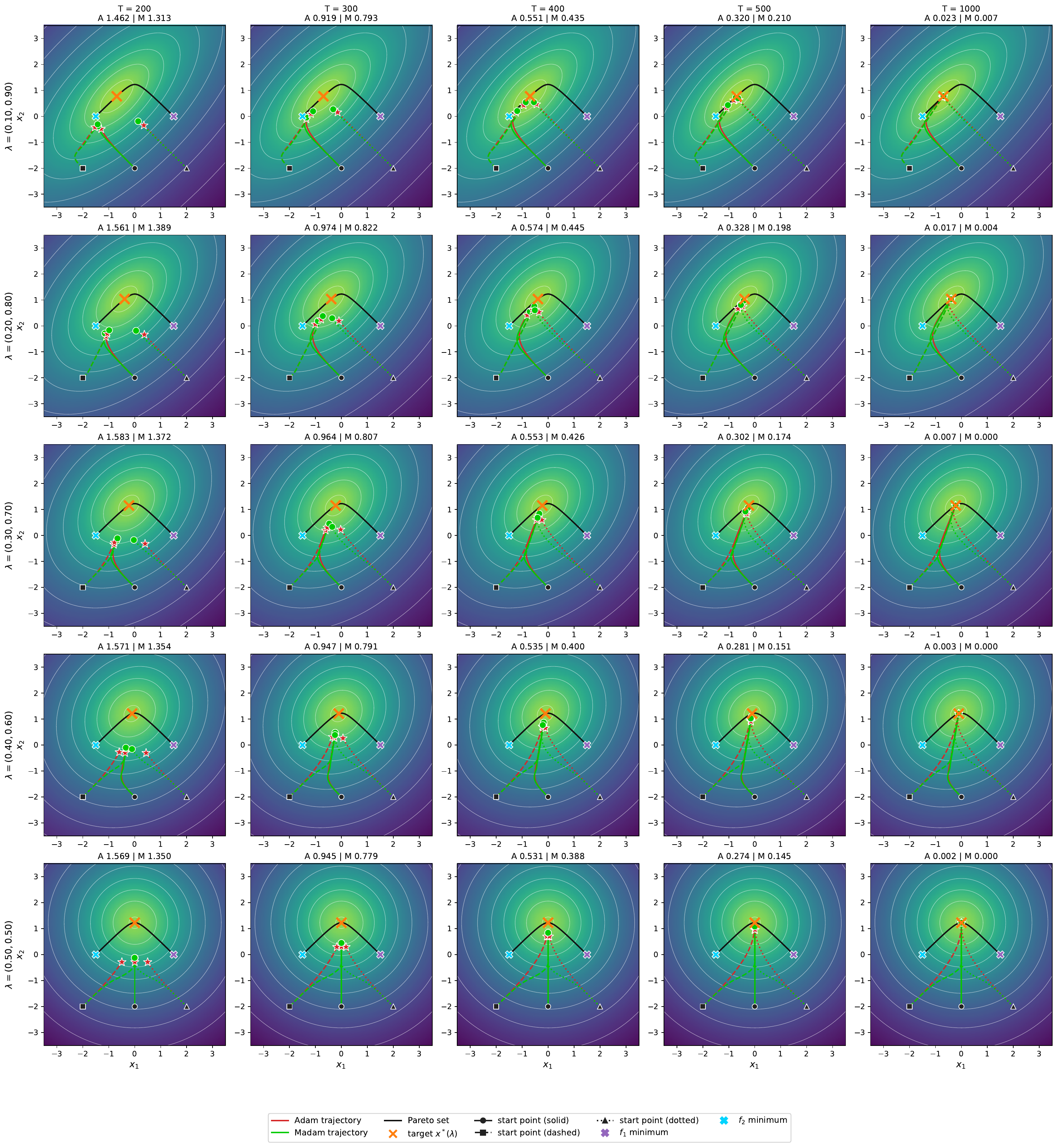}
    \caption{
    \textbf{Decision-space trajectories under changing scalarization geometry.}
    Each row fixes a preference vector $\boldsymbol{\lambda}=(\lambda_1,\lambda_2)$, and each column shows the optimizer state after a different number of steps. The background in each row is the corresponding scalarized objective $\lambda_1 f_1 + \lambda_2 f_2$ for the pair of matched-scale tilted quadratics defined above. The black curve shows the analytic Pareto set, and the orange cross marks the row-specific scalarized optimum $\mathbf{x}^\star(\boldsymbol{\lambda})$. Red and green trajectories compare Adam and \method{} from three shared initial points. The annotation ``A $d_{\mathrm{A}}$ $\mid$ M $d_{\mathrm{M}}$'' in each panel (e.g.\ \texttt{A 1.578 | M 1.363}) reports the mean Euclidean distance from the current iterate to the row-specific scalarized optimum $\mathbf{x}^\star(\boldsymbol{\lambda})$, averaged across the three initializations, for Adam (A) and \method{} (M) respectively. Because the two objectives have opposite principal-curvature directions, changing $\boldsymbol{\lambda}$ changes the correct local metric; \method{} more directly tracks this preference-conditioned geometry, while Adam uses coordinate-wise adaptive scaling in the ambient parameter basis.
    }
    \label{fig:madam-trajectory}
\end{figure}

The visualization mirrors the analysis of Sec.~\ref{sec:mismatch}: because the relevant curvature direction rotates with $\boldsymbol{\lambda}$, any coordinate-wise diagonal preconditioner is locked to the ambient basis and cannot follow this rotation, whereas \method{}'s preference-conditioned metric adapts to it. The contrast is visible in the trajectories—\method{} bends toward $\mathbf{x}^\star(\boldsymbol{\lambda})$ along the row-specific stiff direction, while Adam takes a path shaped by per-coordinate adaptive scales that do not depend on $\boldsymbol{\lambda}$.

\clearpage
\section{Experiment Setup: Datasets and Multi-Objectives}
\label{app:experiment-setup}

This appendix details every benchmark used in Sec.~\ref{sec:experiments}: the dataset, the model, the set of objectives $\{\ell_i\}_{i=1}^C$ that define the MOO problem, and the metrics reported. Train/validation/test splits and architectures follow the corresponding original references unless noted.

\subsection{Multi-Task Learning and Pareto MTL Benchmarks}
\label{app:setup-mtl}

We adopt the experimental protocol of PaLoRA~\citep{lin2025palora}. For each benchmark we evaluate two regimes: the standard MTL setting, in which a single fixed preference vector $\boldsymbol{\lambda}$ (taken to be uniform unless stated otherwise) is used throughout training, and the Pareto MTL setting, in which a family of preference rays is used to trace out the Pareto front.

\paragraph{$\mathtt{MultiMNIST}$~\citep{lecun1998mnist,sener2018multi}.}
Each $36\times36$ input is synthesized by superimposing two MNIST digits, one translated toward the top-left corner and the other toward the bottom-right.
\begin{itemize}\itemsep0pt
    \item \textbf{Tasks ($C=2$):} (i) 10-way classification of the top-left digit and (ii) 10-way classification of the bottom-right digit.
    \item \textbf{Losses:} per-objective cross-entropy.
    \item \textbf{Backbone:} a multi-head LeNet comprising a shared convolutional trunk and two classification heads.
    \item \textbf{Metrics:} per-objective top-1 accuracy.
    \item \textbf{Pareto setup:} hypervolume (HV) computed over $11$ uniformly spaced preference rays on the $2$-simplex, namely $(0.0, 1.0), (0.1, 0.9), \dots, (1.0, 0.0)$, in the $2$-D accuracy space. Reference point: $[0, 0]$.
\end{itemize}

\paragraph{$\mathtt{SARCOS}$~\citep{vijayakumar2000locally}.}
A canonical inverse-dynamics regression benchmark for the seven degrees of freedom of the $\mathtt{SARCOS}$ robot arm: given a $21$-dimensional state vector consisting of joint positions, velocities, and accelerations, the task is to predict the seven corresponding joint torques.
\begin{itemize}\itemsep0pt
    \item \textbf{Tasks ($C=7$):} one regression task per joint torque.
    \item \textbf{Losses:} per-objective mean-squared error.
    \item \textbf{Backbone:} a shared MLP trunk with seven task-specific linear regression heads.
    \item \textbf{Metrics:} per-objective MSE (scaled by a factor of $100$ for readability).
    \item \textbf{Pareto setup:} HV computed over $8$ preference rays in the $7$-D error space: the seven canonical one-hot rays $\mathbf{e}_1,\dots,\mathbf{e}_7$ together with the uniform ray $(1/7,\dots,1/7)$. Reference point: $[1, 5, 1, 1, 10, 10, 1]$.
\end{itemize}

\paragraph{$\mathtt{UTKFace}$~\citep{zhang2017age}.}
A collection of aligned and cropped facial images annotated with age, gender, and race.
\begin{itemize}\itemsep0pt
    \item \textbf{Tasks ($C=3$):} (i) age regression (continuous, normalized to $[0,1]$), (ii) binary gender classification, and (iii) $5$-way race classification.
    \item \textbf{Losses:} Huber loss for age, binary cross-entropy for gender, and cross-entropy for race.
    \item \textbf{Backbone:} a shared ResNet-18 trunk with three task-specific heads.
    \item \textbf{Metrics:} Huber loss for age (scaled by a factor of $100$ for readability) and top-1 accuracy for gender and race.
    \item \textbf{Pareto setup:} the same trained model is queried at each preference vector $\boldsymbol{\lambda}=(\lambda_{\mathrm{age}},\lambda_{\mathrm{gender}},\lambda_{\mathrm{race}})$ drawn from a uniform triangulation of the $3$-simplex of order $10$, yielding $66$ preference rays, and HV is computed in the $3$-D metric space. Reference point: $[0.5, 0.0, 0.0]$.
\end{itemize}

\paragraph{$\mathtt{Cityscapes}$~\citep{cordts2016cityscapes}.}
A large-scale urban driving dataset providing dense pixel-level annotations. Following standard practice in the MTL literature, we adopt the downsampled $128\times256$ resolution variant.
\begin{itemize}\itemsep0pt
    \item \textbf{Tasks ($C=2$):} (i) $19$-class semantic segmentation and (ii) per-pixel inverse-depth regression.
    \item \textbf{Losses:} pixel-wise cross-entropy for segmentation and an $L_1$ regression loss for depth.
    \item \textbf{Backbone:} a SegNet-style encoder--decoder architecture with a shared encoder and two task-specific decoders.
    \item \textbf{Metrics:} mean intersection-over-union (mIoU) and pixel accuracy for segmentation; mean absolute and mean relative error for depth.
    \item \textbf{Pareto setup:} the same trained model is queried at each of the $11$ uniformly spaced preference rays $(0.0, 1.0), (0.1, 0.9), \dots, (1.0, 0.0)$ on the $2$-simplex, and the best ray under the metric of interest is selected for comparison.
\end{itemize}

\paragraph{$\mathtt{NYUv2}$~\citep{silberman2012indoor}.}
Indoor RGB-D scenes with three dense annotation types.
\begin{itemize}\itemsep0pt
    \item \textbf{Tasks ($C=3$):} (i) $13$-class semantic segmentation, (ii) per-pixel depth regression, and (iii) surface-normal prediction.
    \item \textbf{Losses:} pixel-wise cross-entropy for segmentation, an $L_1$ loss for depth, and a cosine distance $1-\cos\theta$ for surface normals.
    \item \textbf{Backbone:} a SegNet-style encoder--decoder architecture with a shared encoder and three task-specific decoders.
    \item \textbf{Metrics:} mIoU and pixel accuracy for segmentation; mean absolute and mean relative error for depth; mean and median angular error together with the within-threshold accuracies at $\{11.25^\circ, 22.5^\circ, 30^\circ\}$ for normals.
    \item \textbf{Pareto setup:} the same trained model is queried at each preference vector $\boldsymbol{\lambda}=(\lambda_{\mathrm{seg}},\lambda_{\mathrm{depth}},\lambda_{\mathrm{normal}})$ drawn from a uniform triangulation of the $3$-simplex of order $10$, yielding $66$ preference rays, and the best ray under the metric of interest is selected for comparison.
\end{itemize}

\subsection{Physics-Informed Neural Networks (PINNs)}
\label{app:setup-pinn}

We use the PINNacle benchmark~\citep{hao2024pinnacle} with $20$ PDE cases spanning Burgers, heat, Poisson, wave, Navier--Stokes, chaotic systems, and high-dimensional Poisson and heat families. For each PDE, the neural network $u_{\boldsymbol{\theta}}$ is trained to satisfy the governing equation in the interior together with the problem-specific boundary, initial, periodic, and observation constraints. Each PDE therefore induces a multi-objective problem whose objectives are the constituent residual and constraint losses, all naturally on different and time-varying scales.

\begin{itemize}\itemsep0pt
    \item \textbf{Tasks (per PDE):}
    Each task corresponds to one residual or constraint loss. Depending on the PDE, this includes one or more PDE residual losses $\ell_{\mathrm{PDE}}$ on interior collocation points, boundary-condition losses $\ell_{\mathrm{BC}}$, initial-condition losses $\ell_{\mathrm{IC}}$, periodicity losses, and point-observation constraint losses. The number of task losses is PDE-dependent and ranges from $2$ to $11$ across the active $20$ PINNacle cases.
    \item \textbf{Losses:} squared residual or constraint errors evaluated at the corresponding interior, boundary, initial, periodic, or point-observation samples.
    \item \textbf{Backbone:} fully connected MLP $u_{\boldsymbol{\theta}}$ with $\tanh$ activations and Glorot-normal initialization; input and output dimensions are determined by the PDE.
    \item \textbf{Metrics:} relative $L_2$ error against the available reference solution or reference data, together with final per-component training and test losses.
\end{itemize}

\subsection{Medical Image Analysis}
\label{app:setup-medical}

We evaluate \method{} on two medical-imaging benchmarks in which the multi-objective formulation arises from \emph{loss decomposition} rather than from multiple supervised tasks: skin-lesion segmentation on $\mathtt{ISIC2018}$ and brain-MRI super-resolution on $\mathtt{OASIS3}$. In both settings the constituent losses sit on markedly different scales and have qualitatively different gradient distributions, exposing the first-moment lag analyzed in Sec.~\ref{sec:mismatch} while remaining clean low-dimensional MOO testbeds.

\paragraph{$\mathtt{ISIC2018}$~\citep{codella2019skin}.}
The ISIC2018 Task~1 lesion-boundary segmentation challenge consists of dermoscopic images paired with binary lesion masks, with a training set of $\sim\!2{,}156$ images, a validation set of $\sim\!538$ images and a test set of $1000$ images. We follow the standard preprocessing pipeline (resize to a fixed input resolution, intensity normalization).
\begin{itemize}\itemsep0pt
    \item \textbf{Tasks ($C=2$):} a single binary lesion-segmentation prediction supervised by two complementary objectives, a pixelwise classification loss and a region-overlap loss.
    \item \textbf{Losses:} pixelwise cross-entropy
    and soft-Dice loss.
    Cross-entropy is dominated by easy interior pixels while Dice is dominated by boundary errors, yielding gradients on substantially different scales.
    \item \textbf{Backbones:} UNet~\citep{ronneberger2015u} and Swin-Transformer~\citep{liu2021swin}, trained from scratch.
    \item \textbf{Metrics:} validation Dice coefficient.
\end{itemize}

\paragraph{$\mathtt{OASIS3}$~\citep{lamontagne2019oasis}.}
A brain-MRI super-resolution benchmark in which a $4\times$ downsampled image is reconstructed to its original resolution. We use a subject-level split of $1{,}129$ volumes for training and $281$ for held-out validation. Intensities are clipped to the $[p_1, p_{99}]$ range and min--max normalized to $[0,1]$; the low-resolution input is generated on the fly by isotropic $4\times$ trilinear downsampling followed by trilinear upsampling back to the original grid.
\begin{itemize}\itemsep0pt
    \item \textbf{Tasks ($C=5$):} a single high-resolution reconstruction supervised by one pixelwise loss and four auxiliary losses chosen to capture high-frequency detail and structural fidelity.
    \item \textbf{Losses:} L1 pixelwise reconstruction, Sobel edge, FFT spectral, Laplacian high-pass, and SSIM structural-similarity. The five terms differ in scale by more than an order of magnitude and exhibit distinct gradient distributions.
    \item \textbf{Backbone:} SwinUNETR~\citep{hatamizadeh2021swin} trained from scratch.
    \item \textbf{Metrics:} LPIPS, DISTS, PSNR and SSIM computed per subject.
\end{itemize}

\clearpage
\section{$\mathtt{NYUv2}$ MTL Results}\label{app:nyuv2-mtl}

Tab.~\ref{tab:nyuv2-mtl} reports the full $\mathtt{NYUv2}$ MTL results referenced in the main text.

\begin{table}[!t]
\centering
\scriptsize
\setlength{\tabcolsep}{3pt}
\caption{$\mathtt{NYUv2}$ MTL results (mean over three seeds). Surface-normal columns report angle errors (Mean, Med.) and the percentage of pixels within $11.25^\circ\!/22.5^\circ\!/30^\circ$. Bold marks the better value within each \mbox{Base/+\method{}} block.}
\label{tab:nyuv2-mtl}
\resizebox{\linewidth}{!}{%
\begin{tabular}{lccccccccc}
\toprule
& \multicolumn{2}{c}{Seg.} & \multicolumn{2}{c}{Depth} & \multicolumn{5}{c}{Surface Normal} \\
\cmidrule(lr){2-3}\cmidrule(lr){4-5}\cmidrule(lr){6-10}
Method & mIoU $\uparrow$ & Pix Acc $\uparrow$ & Abs Err $\downarrow$ & Rel Err $\downarrow$ & Mean $\downarrow$ & Med. $\downarrow$ & $<\!11.25^\circ$ $\uparrow$ & $<\!22.5^\circ$ $\uparrow$ & $<\!30^\circ$ $\uparrow$ \\
\midrule
LS       & 36.33 & 63.57 & \textbf{0.544} & 0.222 & 27.54 & 21.69 & 26.59 & 52.00 & 64.24 \\
\rlightgray
+ Madam  & \textbf{37.56} & \textbf{64.13} & 0.550 & \textbf{0.221} & \textbf{27.31} & \textbf{21.60} & \textbf{26.64} & \textbf{52.60} & \textbf{64.86} \\
\midrule
DWA~\citep{liu2019dwa}      & 37.07 & 63.55 & \textbf{0.541} & 0.220 & \textbf{27.63} & \textbf{21.90} & 26.40 & 51.59 & 63.91 \\
\rlightgray
+ Madam  & \textbf{37.21} & \textbf{63.89} & \textbf{0.541} & \textbf{0.218} & \textbf{27.63} & 22.05 & \textbf{26.41} & \textbf{51.71} & \textbf{64.49} \\
\midrule
UW~\citep{kendall2018uncertainty}       & \textbf{32.60} & 60.87 & \textbf{0.540} & 0.220 & \textbf{28.22} & \textbf{22.43} & \textbf{26.05} & \textbf{50.60} & \textbf{62.69} \\
\rlightgray
+ Madam  & 32.59 & \textbf{60.91} & 0.546 & \textbf{0.217} & 28.29 & 22.67 & 25.55 & 50.18 & 62.45 \\
\midrule
IMTL~\citep{liu2021imtl}     & 36.14 & 63.51 & \textbf{0.536} & \textbf{0.213} & \textbf{25.90} & 19.74 & 29.74 & 55.98 & 67.80 \\
\rlightgray
+ Madam  & \textbf{36.26} & \textbf{63.68} & 0.543 & 0.214 & 26.04 & \textbf{19.70} & \textbf{29.84} & \textbf{56.03} & \textbf{67.81} \\
\midrule
CAGrad~\citep{liu2021conflict}   & \textbf{36.99} & \textbf{63.90} & \textbf{0.543} & \textbf{0.216} & 25.87 & 19.72 & \textbf{29.58} & 56.12 & 67.96 \\
\rlightgray
+ Madam  & 36.88 & 63.64 & 0.546 & 0.224 & \textbf{25.56} & \textbf{19.55} & 29.35 & \textbf{56.32} & \textbf{68.39} \\
\midrule
PaMaL~\citep{dimitriadis2023pamal}    & \textbf{33.94} & \textbf{62.55} & 0.5592 & 0.2188 & 26.60 & 20.33 & 29.09 & 54.61 & 66.35 \\
\rlightgray
+ Madam  & 33.71 & 62.34 & \textbf{0.5540} & \textbf{0.2165} & \textbf{26.47} & \textbf{20.24} & \textbf{29.13} & \textbf{54.70} & \textbf{66.42} \\
\midrule
PaLoRA~\citep{lin2025palora}   & 38.27 & 64.79 & \textbf{0.5370} & \textbf{0.2150} & 25.66 & 19.34 & 30.47 & 56.90 & 68.56 \\
\rlightgray
+ Madam  & \textbf{38.32} & \textbf{64.89} & 0.5398 & 0.2167 & \textbf{25.60} & \textbf{19.28} & \textbf{30.57} & \textbf{57.07} & \textbf{68.75} \\
\bottomrule
\end{tabular}
}
\end{table}

\clearpage
\section{Detailed Per-Task Pareto MTL Results}\label{app:multimnist-pareto-per-objective}

Tab.~\ref{tab:multimnist-pareto-mtl-per-objective} reports the per-objective accuracies on $\mathtt{MultiMNIST}$ that underlie the Pareto fronts in Fig.~\ref{fig:multimnist-pareto-mtl}. For each of the $11$ preference rays $r_0,\dots,r_{10}$ we list the top-left (TL) and bottom-right (BR) digit accuracies separately, alongside the hypervolume (HV) computed in the two-task accuracy space. Numbers are mean~$\pm$~std across three seeds.

\subsection{\texorpdfstring{$\mathtt{MultiMNIST}$}{MultiMNIST}}

Tab.~\ref{tab:multimnist-pareto-mtl-per-objective} expands the $\mathtt{MultiMNIST}$ Pareto fronts of Fig.~\ref{fig:multimnist-pareto-mtl} to a per-objective breakdown for both Pareto-front solvers. For each preference ray $r_0,\dots,r_{10}$ we list the top-left (TL) and bottom-right (BR) digit accuracies separately, together with the $2$-D hypervolume HV computed in the two-task accuracy space against the reference point $[0,0]$. \method{} raises HV from $0.9016$ to $0.9177$ for PaLoRA and from $0.8869$ to $0.9086$ for PaMaL, and improves both per-objective accuracies on every ray for PaLoRA. For PaMaL the gain is uniform on the central rays $r_2,\dots,r_8$ but reverses on the extreme rays $r_9, r_{10}$ on the TL task and on $r_0$ on the BR task: at these endpoints the base solver pushes harder on the dominant task at the expense of the other, whereas \method{} produces a more balanced trade-off, which is consistent with the front-shape change visible in Fig.~\ref{fig:multimnist-pareto-mtl}.

\begin{table}[!t]
\centering
\scriptsize
\setlength{\tabcolsep}{2.5pt}
\caption{$\mathtt{MultiMNIST}$ Pareto MTL results, per-objective breakdown (mean $\pm$ std over three seeds). For each base solver and each \mbox{Base/+\method{}} variant we list the top-left (TL) and bottom-right (BR) digit accuracies at every preference ray $r_0,\dots,r_{10}$, together with HV in the two-task accuracy space.}
\label{tab:multimnist-pareto-mtl-per-objective}
\resizebox{\linewidth}{!}{%
\begin{tabular}{llcccccccccccc}
\toprule
Method & Task & HV $\uparrow$ & $r_0$ & $r_1$ & $r_2$ & $r_3$ & $r_4$ & $r_5$ & $r_6$ & $r_7$ & $r_8$ & $r_9$ & $r_{10}$ \\
\midrule
\multirow{2}{*}{PaMaL~\citep{dimitriadis2023pamal}} & TL & \multirow{2}{*}{0.8869 $\pm$ 0.0060} & 94.73 $\pm$ 0.05 & 94.54 $\pm$ 0.12 & 94.21 $\pm$ 0.10 & 93.96 $\pm$ 0.03 & 93.75 $\pm$ 0.05 & 93.55 $\pm$ 0.11 & 93.00 $\pm$ 0.24 & 91.89 $\pm$ 0.35 & 89.48 $\pm$ 0.27 & \textbf{85.12 $\pm$ 0.82} & \textbf{77.44 $\pm$ 1.76} \\
 & BR &  & \textbf{76.44 $\pm$ 1.81} & 83.97 $\pm$ 1.14 & 88.33 $\pm$ 0.46 & 90.68 $\pm$ 0.38 & 91.88 $\pm$ 0.49 & 92.39 $\pm$ 0.57 & 92.57 $\pm$ 0.68 & 92.80 $\pm$ 0.78 & 93.14 $\pm$ 0.82 & 93.50 $\pm$ 0.68 & 93.83 $\pm$ 0.50 \\
\rlightgray
 & TL &  & \textbf{96.01 $\pm$ 0.21} & \textbf{95.74 $\pm$ 0.16} & \textbf{95.48 $\pm$ 0.24} & \textbf{95.30 $\pm$ 0.27} & \textbf{95.11 $\pm$ 0.18} & \textbf{94.81 $\pm$ 0.12} & \textbf{94.16 $\pm$ 0.14} & \textbf{92.86 $\pm$ 0.19} & \textbf{89.95 $\pm$ 0.22} & 83.63 $\pm$ 1.58 & 72.87 $\pm$ 4.34 \\
\rlightgray
\multirow{-2}{*}{+ \method{}} & BR & \multirow{-2}{*}{\textbf{0.9086 $\pm$ 0.0050}} & 74.32 $\pm$ 2.40 & 83.87 $\pm$ 1.38 & \textbf{89.15 $\pm$ 0.36} & \textbf{91.81 $\pm$ 0.12} & \textbf{93.13 $\pm$ 0.36} & \textbf{93.70 $\pm$ 0.39} & \textbf{94.03 $\pm$ 0.32} & \textbf{94.27 $\pm$ 0.39} & \textbf{94.51 $\pm$ 0.37} & \textbf{94.72 $\pm$ 0.32} & \textbf{94.80 $\pm$ 0.24} \\
\midrule
\multirow{2}{*}{PaLoRA~\citep{lin2025palora}} & TL & \multirow{2}{*}{0.9016 $\pm$ 0.0045} & 95.61 $\pm$ 0.08 & 95.63 $\pm$ 0.10 & 95.60 $\pm$ 0.11 & 95.58 $\pm$ 0.10 & 95.48 $\pm$ 0.11 & 95.32 $\pm$ 0.12 & 95.04 $\pm$ 0.14 & 94.71 $\pm$ 0.21 & 94.27 $\pm$ 0.24 & 93.85 $\pm$ 0.27 & 93.20 $\pm$ 0.38 \\
 & BR &  & 92.02 $\pm$ 0.94 & 92.55 $\pm$ 0.76 & 92.99 $\pm$ 0.67 & 93.26 $\pm$ 0.57 & 93.59 $\pm$ 0.45 & 93.83 $\pm$ 0.40 & 93.99 $\pm$ 0.38 & 94.19 $\pm$ 0.35 & 94.24 $\pm$ 0.28 & 94.22 $\pm$ 0.28 & 94.26 $\pm$ 0.27 \\
\rlightgray
 & TL &  & \textbf{96.45 $\pm$ 0.00} & \textbf{96.45 $\pm$ 0.05} & \textbf{96.36 $\pm$ 0.11} & \textbf{96.27 $\pm$ 0.11} & \textbf{96.20 $\pm$ 0.05} & \textbf{96.05 $\pm$ 0.08} & \textbf{95.82 $\pm$ 0.03} & \textbf{95.57 $\pm$ 0.03} & \textbf{95.11 $\pm$ 0.14} & \textbf{94.61 $\pm$ 0.18} & \textbf{93.93 $\pm$ 0.40} \\
\rlightgray
\multirow{-2}{*}{+ \method{}} & BR & \multirow{-2}{*}{\textbf{0.9177 $\pm$ 0.0024}} & \textbf{92.71 $\pm$ 0.46} & \textbf{93.30 $\pm$ 0.41} & \textbf{93.76 $\pm$ 0.35} & \textbf{94.06 $\pm$ 0.34} & \textbf{94.36 $\pm$ 0.35} & \textbf{94.55 $\pm$ 0.34} & \textbf{94.72 $\pm$ 0.26} & \textbf{94.90 $\pm$ 0.27} & \textbf{94.98 $\pm$ 0.27} & \textbf{95.06 $\pm$ 0.21} & \textbf{95.12 $\pm$ 0.17} \\
\bottomrule
\end{tabular}
}
\end{table}

\subsection{\texorpdfstring{$\mathtt{SARCOS}$}{SARCOS}}

Tabs.~\ref{tab:sarcos-pareto-mtl-pamal-per-objective} and~\ref{tab:sarcos-pareto-mtl-palora-per-objective} expand the $\mathtt{SARCOS}$ Pareto MTL results of Tab.~\ref{tab:sarcos-pareto-mtl} to a per-objective breakdown. For each preference ray $r_0,\dots,r_7$ we list the error of every joint task $T_1,\dots,T_7$ (scaled by $100$, mean~$\pm$~std across three seeds) together with the $7$-D hypervolume HV (computed against the reference point $[1,5,1,1,10,10,1]$). PaMaL is markedly less stable than PaLoRA, with occasional seeds blowing up on $T_2$ and $T_5$ at extreme rays; the inflated standard deviations in those cells reflect this rather than a systematic regression of \method{}.

\begin{table}[!t]
\centering
\scriptsize
\setlength{\tabcolsep}{2.5pt}
\caption{$\mathtt{SARCOS}$ Pareto MTL results for \textbf{PaMaL}, per-objective breakdown (errors $\times 100$, mean $\pm$ std over three seeds). HV is computed in the $7$-D error space. Bold marks the better value within each \mbox{Base/+\method{}} block (lower is better).}
\label{tab:sarcos-pareto-mtl-pamal-per-objective}
\resizebox{\linewidth}{!}{%
\begin{tabular}{llcrrrrrrrr}
\toprule
Method & Task & HV $\uparrow$ & $r_0$ $\downarrow$ & $r_1$ $\downarrow$ & $r_2$ $\downarrow$ & $r_3$ $\downarrow$ & $r_4$ $\downarrow$ & $r_5$ $\downarrow$ & $r_6$ $\downarrow$ & $r_7$ $\downarrow$ \\
\midrule
\multirow{7}{*}{PaMaL~\citep{dimitriadis2023pamal}} & T1 & \multirow{7}{*}{378.29 $\pm$ 9.27} & \textbf{2.55 $\pm$ 0.13} & 8.15 $\pm$ 2.21 & \textbf{6.88 $\pm$ 2.15} & 6.24 $\pm$ 0.93 & 7.63 $\pm$ 1.07 & \textbf{7.08 $\pm$ 1.34} & 6.69 $\pm$ 1.67 & 5.72 $\pm$ 0.49 \\
 & T2 &  & 99.65 $\pm$ 31.18 & 35.68 $\pm$ 2.17 & \textbf{60.41 $\pm$ 4.89} & 80.37 $\pm$ 33.37 & 248.11 $\pm$ 194.08 & 63.79 $\pm$ 8.22 & 73.86 $\pm$ 5.44 & 46.55 $\pm$ 1.69 \\
 & T3 &  & 4.50 $\pm$ 1.53 & 3.03 $\pm$ 0.36 & \textbf{1.19 $\pm$ 0.07} & 2.67 $\pm$ 0.39 & 5.49 $\pm$ 1.45 & 2.51 $\pm$ 0.15 & 3.13 $\pm$ 0.70 & 1.79 $\pm$ 0.04 \\
 & T4 &  & 3.04 $\pm$ 0.53 & 2.69 $\pm$ 0.59 & \textbf{1.97 $\pm$ 0.41} & 0.57 $\pm$ 0.26 & 5.11 $\pm$ 3.03 & 1.89 $\pm$ 0.35 & 2.44 $\pm$ 1.02 & 1.35 $\pm$ 0.06 \\
 & T5 &  & 470.92 $\pm$ 468.44 & 197.50 $\pm$ 46.96 & \textbf{142.53 $\pm$ 13.23} & \textbf{139.74 $\pm$ 6.40} & 117.39 $\pm$ 26.31 & 181.94 $\pm$ 37.38 & 230.66 $\pm$ 75.37 & 114.95 $\pm$ 2.87 \\
 & T6 &  & \textbf{32.20 $\pm$ 2.91} & 39.14 $\pm$ 2.98 & \textbf{29.49 $\pm$ 1.90} & 27.79 $\pm$ 1.94 & 44.53 $\pm$ 5.93 & 22.75 $\pm$ 7.90 & 28.77 $\pm$ 1.19 & 25.75 $\pm$ 0.72 \\
 & T7 &  & 2.13 $\pm$ 0.22 & \textbf{2.35 $\pm$ 0.53} & \textbf{1.90 $\pm$ 0.55} & \textbf{1.63 $\pm$ 0.09} & 3.08 $\pm$ 1.48 & 3.01 $\pm$ 1.31 & 41.40 $\pm$ 57.86 & 1.24 $\pm$ 0.04 \\
\midrule
\rlightgray
 & T1 &  & 3.20 $\pm$ 1.51 & \textbf{6.60 $\pm$ 0.70} & 9.03 $\pm$ 5.98 & \textbf{5.23 $\pm$ 0.58} & \textbf{6.95 $\pm$ 1.43} & 7.53 $\pm$ 0.42 & \textbf{4.96 $\pm$ 0.39} & \textbf{4.95 $\pm$ 0.06} \\
\rlightgray
 & T2 &  & \textbf{65.91 $\pm$ 18.98} & \textbf{29.59 $\pm$ 0.23} & 236.04 $\pm$ 251.90 & \textbf{68.36 $\pm$ 20.81} & \textbf{86.71 $\pm$ 5.40} & \textbf{55.37 $\pm$ 5.25} & \textbf{59.14 $\pm$ 4.04} & \textbf{40.47 $\pm$ 3.63} \\
\rlightgray
 & T3 &  & \textbf{2.59 $\pm$ 0.42} & \textbf{2.91 $\pm$ 0.15} & 2.27 $\pm$ 1.63 & \textbf{2.29 $\pm$ 0.26} & \textbf{3.79 $\pm$ 1.56} & \textbf{2.29 $\pm$ 0.14} & \textbf{2.25 $\pm$ 0.17} & \textbf{1.63 $\pm$ 0.10} \\
\rlightgray
 & T4 &  & \textbf{2.04 $\pm$ 0.80} & \textbf{2.21 $\pm$ 0.33} & 4.84 $\pm$ 3.38 & \textbf{0.29 $\pm$ 0.02} & \textbf{2.72 $\pm$ 0.67} & \textbf{1.73 $\pm$ 0.14} & \textbf{1.82 $\pm$ 0.99} & \textbf{1.09 $\pm$ 0.05} \\
\rlightgray
 & T5 &  & \textbf{115.65 $\pm$ 5.26} & \textbf{152.62 $\pm$ 9.57} & 312.65 $\pm$ 240.60 & 144.33 $\pm$ 10.84 & \textbf{90.34 $\pm$ 1.34} & \textbf{128.65 $\pm$ 9.42} & \textbf{131.23 $\pm$ 15.43} & \textbf{104.75 $\pm$ 3.07} \\
\rlightgray
 & T6 &  & 32.97 $\pm$ 3.82 & \textbf{34.65 $\pm$ 2.61} & 33.92 $\pm$ 8.04 & \textbf{24.98 $\pm$ 1.81} & \textbf{37.56 $\pm$ 2.52} & \textbf{16.06 $\pm$ 2.13} & \textbf{28.15 $\pm$ 1.50} & \textbf{21.17 $\pm$ 0.29} \\
\rlightgray
\multirow{-7}{*}{+ \method{}} & T7 & \multirow{-7}{*}{\textbf{394.55 $\pm$ 4.20}} & \textbf{2.05 $\pm$ 0.33} & 2.78 $\pm$ 0.54 & 4.29 $\pm$ 2.91 & 1.70 $\pm$ 0.41 & \textbf{2.38 $\pm$ 0.27} & \textbf{2.48 $\pm$ 0.97} & \textbf{1.45 $\pm$ 0.78} & \textbf{1.00 $\pm$ 0.05} \\
\bottomrule
\end{tabular}
}
\end{table}

\begin{table}[!t]
\centering
\scriptsize
\setlength{\tabcolsep}{2.5pt}
\caption{$\mathtt{SARCOS}$ Pareto MTL results for \textbf{PaLoRA}, per-objective breakdown (errors $\times 100$, mean $\pm$ std over three seeds). HV is computed in the $7$-D error space. Bold marks the better value within each \mbox{Base/+\method{}} block (lower is better; ties to two decimals are left unbolded).}
\label{tab:sarcos-pareto-mtl-palora-per-objective}
\resizebox{\linewidth}{!}{%
\begin{tabular}{llcrrrrrrrr}
\toprule
Method & Task & HV $\uparrow$ & $r_0$ $\downarrow$ & $r_1$ $\downarrow$ & $r_2$ $\downarrow$ & $r_3$ $\downarrow$ & $r_4$ $\downarrow$ & $r_5$ $\downarrow$ & $r_6$ $\downarrow$ & $r_7$ $\downarrow$ \\
\midrule
\multirow{7}{*}{PaLoRA~\citep{lin2025palora}} & T1 & \multirow{7}{*}{431.21 $\pm$ 8.47} & 2.00 $\pm$ 0.14 & 3.15 $\pm$ 1.19 & 2.90 $\pm$ 1.18 & 2.65 $\pm$ 0.64 & 2.53 $\pm$ 0.34 & 2.27 $\pm$ 0.26 & 2.52 $\pm$ 0.66 & 2.16 $\pm$ 0.21 \\
 & T2 &  & 30.73 $\pm$ 9.03 & 22.06 $\pm$ 7.79 & 34.19 $\pm$ 19.67 & 23.56 $\pm$ 2.36 & 29.87 $\pm$ 5.78 & 32.99 $\pm$ 10.61 & 30.93 $\pm$ 9.54 & 21.17 $\pm$ 3.20 \\
 & T3 &  & 1.09 $\pm$ 0.32 & \textbf{1.32 $\pm$ 0.46} & \textbf{0.92 $\pm$ 0.22} & \textbf{0.95 $\pm$ 0.12} & \textbf{1.06 $\pm$ 0.07} & \textbf{0.93 $\pm$ 0.14} & \textbf{1.13 $\pm$ 0.42} & \textbf{0.94 $\pm$ 0.18} \\
 & T4 &  & 0.43 $\pm$ 0.18 & \textbf{0.36 $\pm$ 0.03} & 0.34 $\pm$ 0.07 & 0.30 $\pm$ 0.04 & 0.34 $\pm$ 0.04 & 0.39 $\pm$ 0.11 & 0.37 $\pm$ 0.10 & 0.31 $\pm$ 0.05 \\
 & T5 &  & 67.43 $\pm$ 7.00 & 69.84 $\pm$ 4.36 & 73.76 $\pm$ 17.93 & 69.35 $\pm$ 6.04 & 62.33 $\pm$ 5.38 & 84.22 $\pm$ 32.89 & 82.23 $\pm$ 21.24 & 60.15 $\pm$ 2.55 \\
 & T6 &  & 13.47 $\pm$ 2.60 & 14.03 $\pm$ 1.88 & \textbf{15.55 $\pm$ 3.16} & \textbf{11.49 $\pm$ 0.95} & \textbf{11.75 $\pm$ 0.56} & \textbf{10.37 $\pm$ 0.58} & \textbf{12.09 $\pm$ 1.44} & \textbf{11.14 $\pm$ 1.16} \\
 & T7 &  & 0.57 $\pm$ 0.11 & 1.31 $\pm$ 1.28 & 0.52 $\pm$ 0.14 & 0.41 $\pm$ 0.04 & 0.42 $\pm$ 0.06 & 0.83 $\pm$ 0.59 & 0.40 $\pm$ 0.07 & 0.44 $\pm$ 0.10 \\
\midrule
\rlightgray
 & T1 &  & \textbf{1.59 $\pm$ 0.19} & \textbf{1.93 $\pm$ 0.05} & \textbf{1.91 $\pm$ 0.08} & \textbf{1.83 $\pm$ 0.01} & \textbf{1.88 $\pm$ 0.13} & \textbf{1.85 $\pm$ 0.05} & \textbf{1.99 $\pm$ 0.31} & \textbf{1.67 $\pm$ 0.04} \\
\rlightgray
 & T2 &  & \textbf{22.54 $\pm$ 3.75} & \textbf{15.60 $\pm$ 0.82} & \textbf{20.00 $\pm$ 4.16} & \textbf{17.60 $\pm$ 0.29} & \textbf{21.25 $\pm$ 2.89} & \textbf{19.87 $\pm$ 0.84} & \textbf{18.77 $\pm$ 0.68} & \textbf{16.47 $\pm$ 1.12} \\
\rlightgray
 & T3 &  & \textbf{0.95 $\pm$ 0.30} & 1.36 $\pm$ 0.51 & 1.34 $\pm$ 0.97 & 1.22 $\pm$ 0.66 & 1.41 $\pm$ 0.74 & 1.39 $\pm$ 0.76 & 1.28 $\pm$ 0.70 & 1.17 $\pm$ 0.67 \\
\rlightgray
 & T4 &  & \textbf{0.32 $\pm$ 0.06} & 0.38 $\pm$ 0.11 & \textbf{0.30 $\pm$ 0.01} & \textbf{0.24 $\pm$ 0.03} & \textbf{0.31 $\pm$ 0.05} & \textbf{0.29 $\pm$ 0.04} & \textbf{0.28 $\pm$ 0.01} & \textbf{0.24 $\pm$ 0.00} \\
\rlightgray
 & T5 &  & \textbf{64.73 $\pm$ 15.49} & \textbf{64.23 $\pm$ 7.92} & \textbf{54.06 $\pm$ 5.05} & \textbf{53.67 $\pm$ 2.84} & \textbf{50.14 $\pm$ 0.94} & \textbf{71.29 $\pm$ 7.43} & \textbf{67.00 $\pm$ 18.93} & \textbf{52.35 $\pm$ 4.06} \\
\rlightgray
 & T6 &  & \textbf{13.30 $\pm$ 2.47} & \textbf{13.66 $\pm$ 2.82} & 16.70 $\pm$ 9.04 & 12.20 $\pm$ 2.13 & 22.96 $\pm$ 14.33 & 11.79 $\pm$ 3.73 & 14.49 $\pm$ 5.21 & 13.91 $\pm$ 4.56 \\
\rlightgray
\multirow{-7}{*}{+ \method{}} & T7 & \multirow{-7}{*}{\textbf{442.17 $\pm$ 3.41}} & \textbf{0.38 $\pm$ 0.03} & \textbf{0.58 $\pm$ 0.26} & \textbf{0.41 $\pm$ 0.02} & 0.41 $\pm$ 0.09 & \textbf{0.38 $\pm$ 0.02} & \textbf{0.51 $\pm$ 0.16} & \textbf{0.32 $\pm$ 0.03} & \textbf{0.37 $\pm$ 0.05} \\
\bottomrule
\end{tabular}
}
\end{table}

\begin{table}[!t]
\centering
\scriptsize
\setlength{\tabcolsep}{2.5pt}
\caption{Mean L2RE on the \textbf{Vanilla} PINN family (PINN). Bold marks the better Base/+\method{} value (lower is better).}
\label{tb:pinn-vanilla}
\resizebox{\linewidth}{!}{%
\begin{tabular}{lccccccccccccccccccc}
\toprule
\multirow{2}{*}{Method} & \multicolumn{2}{c}{Burgers} & \multicolumn{3}{c}{Poisson} & \multicolumn{4}{c}{Heat} & \multicolumn{3}{c}{NS} & \multicolumn{3}{c}{Wave} & \multicolumn{2}{c}{Chaotic} & \multicolumn{2}{c}{High dim} \\
\cmidrule(lr){2-3} \cmidrule(lr){4-6} \cmidrule(lr){7-10} \cmidrule(lr){11-13} \cmidrule(lr){14-16} \cmidrule(lr){17-18} \cmidrule(lr){19-20}
 & 1d-C & 2d-C & 2d-C & 3d-CG & 2d-MS & 2d-VC & 2d-MS & 2d-CG & 2d-LT & 2d-C & 2d-CG & 2d-LT & 1d-C & 2d-CG & 2d-MS & GS & KS & PNd & HNd \\
\midrule
PINN~\citep{raissi2019physics}          & 1.45E-2 & \textbf{3.24E-1} & 6.94E-1 & 5.60E-1 & 6.30E-1 & 1.01E+0 & 6.21E-2 & 3.64E-2 & 9.99E-1 & 4.70E-2 & 1.19E-1 & 9.96E-1 & 5.88E-1 & 1.84E+0 & 1.34E+0 & 3.19E-1 & 1.01E+0 & 3.04E-3 & \textbf{3.61E-1} \\
\rlightgray
+ \method{}   & \textbf{1.29E-2} & 5.16E-1 & \textbf{6.62E-1} & \textbf{1.94E-1} & \textbf{6.25E-1} & \textbf{9.40E-1} & \textbf{2.83E-2} & \textbf{1.82E-2} & \textbf{9.98E-1} & \textbf{4.15E-2} & \textbf{9.74E-2} & \textbf{9.94E-1} & \textbf{4.86E-1} & \textbf{1.00E+0} & \textbf{8.05E-1} & \textbf{9.32E-2} & \textbf{9.71E-1} & \textbf{7.86E-4} & 5.46E-1 \\
\bottomrule
\end{tabular}
}
\end{table}

\begin{table}[!t]
\centering
\scriptsize
\setlength{\tabcolsep}{2.5pt}
\caption{Mean L2RE on the \textbf{Loss Reweighting/Sampling} PINN family (LRA, NTK, RAR). Bold marks the better Base/+\method{} value (lower is better); NaN denotes non-convergence.}
\label{tb:pinn-reweighting}
\resizebox{\linewidth}{!}{%
\begin{tabular}{lccccccccccccccccccc}
\toprule
\multirow{2}{*}{Method} & \multicolumn{2}{c}{Burgers} & \multicolumn{3}{c}{Poisson} & \multicolumn{4}{c}{Heat} & \multicolumn{3}{c}{NS} & \multicolumn{3}{c}{Wave} & \multicolumn{2}{c}{Chaotic} & \multicolumn{2}{c}{High dim} \\
\cmidrule(lr){2-3} \cmidrule(lr){4-6} \cmidrule(lr){7-10} \cmidrule(lr){11-13} \cmidrule(lr){14-16} \cmidrule(lr){17-18} \cmidrule(lr){19-20}
 & 1d-C & 2d-C & 2d-C & 3d-CG & 2d-MS & 2d-VC & 2d-MS & 2d-CG & 2d-LT & 2d-C & 2d-CG & 2d-LT & 1d-C & 2d-CG & 2d-MS & GS & KS & PNd & HNd \\
\midrule
LRA~\citep{wang2021understanding}           & 2.61E-2 & \textbf{2.60E-1} & 1.17E-1 & 1.02E-1 & 7.94E-1 & 2.12E-1 & 8.79E-2 & 1.25E-1 & 9.99E-1 & NaN & 3.32E-1 & 1.00E+0 & 3.61E-1 & 1.48E+0 & 1.02E+0 & \textbf{9.37E-2} & 9.57E-1 & 4.58E-4 & 3.94E-1 \\
\rlightgray
+ \method{}   & \textbf{2.04E-2} & 4.64E-1 & \textbf{2.62E-2} & \textbf{8.47E-2} & \textbf{7.11E-1} & \textbf{2.01E-1} & \textbf{3.93E-2} & \textbf{1.08E-1} & 9.99E-1 & NaN & \textbf{3.01E-1} & 1.00E+0 & \textbf{1.25E-1} & \textbf{1.01E+0} & \textbf{9.96E-1} & 1.24E-1 & \textbf{9.52E-1} & 4.58E-4 & \textbf{2.85E-1} \\
\midrule
NTK~\citep{wang2022ntk}           & 1.84E-2 & \textbf{2.75E-1} & 1.23E-2 & 9.47E-1 & 7.48E-1 & 2.14E-1 & 4.40E-2 & 1.16E-1 & 1.00E+0 & 1.98E-1 & 2.93E-1 & 9.99E-1 & 9.79E-2 & 2.16E+0 & 1.04E+0 & 2.16E-1 & 9.64E-1 & \textbf{4.64E-3} & 3.97E-1 \\
\rlightgray
+ \method{}   & \textbf{1.48E-2} & 4.64E-1 & \textbf{5.60E-3} & \textbf{9.41E-1} & \textbf{7.18E-1} & \textbf{2.00E-1} & \textbf{2.41E-2} & \textbf{8.71E-2} & \textbf{9.98E-1} & \textbf{1.76E-1} & \textbf{2.80E-1} & 9.99E-1 & \textbf{8.51E-2} & \textbf{9.90E-1} & \textbf{7.91E-1} & \textbf{1.85E-1} & \textbf{9.47E-1} & 4.74E-3 & \textbf{3.92E-1} \\
\midrule
RAR~\citep{wu2023comprehensive}           & 3.32E-2 & \textbf{3.45E-1} & 6.99E-1 & 5.76E-1 & \textbf{6.44E-1} & 9.66E-1 & 7.49E-2 & 2.72E-2 & 9.99E-1 & 4.69E-1 & 3.34E-1 & 1.00E+0 & 5.39E-1 & 1.15E+0 & 1.35E+0 & \textbf{9.46E-2} & 1.01E+0 & 3.59E-3 & \textbf{3.57E-1} \\
\rlightgray
+ \method{}   & \textbf{2.43E-2} & 5.21E-1 & \textbf{6.90E-1} & \textbf{5.12E-1} & 6.51E-1 & \textbf{9.63E-1} & \textbf{3.18E-2} & \textbf{1.84E-2} & 9.99E-1 & \textbf{4.08E-2} & \textbf{1.02E-1} & \textbf{9.94E-1} & \textbf{4.96E-1} & \textbf{1.04E+0} & \textbf{9.09E-1} & 1.93E-1 & 1.01E+0 & \textbf{3.21E-3} & 3.81E-1 \\
\bottomrule
\end{tabular}
}
\end{table}

\begin{table}[!t]
\centering
\scriptsize
\setlength{\tabcolsep}{2.5pt}
\caption{Mean L2RE on the \textbf{Architecture} PINN family (LAAF, GAAF). Bold marks the better Base/+\method{} value (lower is better).}
\label{tb:pinn-architecture}
\resizebox{\linewidth}{!}{%
\begin{tabular}{lccccccccccccccccccc}
\toprule
\multirow{2}{*}{Method} & \multicolumn{2}{c}{Burgers} & \multicolumn{3}{c}{Poisson} & \multicolumn{4}{c}{Heat} & \multicolumn{3}{c}{NS} & \multicolumn{3}{c}{Wave} & \multicolumn{2}{c}{Chaotic} & \multicolumn{2}{c}{High dim} \\
\cmidrule(lr){2-3} \cmidrule(lr){4-6} \cmidrule(lr){7-10} \cmidrule(lr){11-13} \cmidrule(lr){14-16} \cmidrule(lr){17-18} \cmidrule(lr){19-20}
 & 1d-C & 2d-C & 2d-C & 3d-CG & 2d-MS & 2d-VC & 2d-MS & 2d-CG & 2d-LT & 2d-C & 2d-CG & 2d-LT & 1d-C & 2d-CG & 2d-MS & GS & KS & PNd & HNd \\
\midrule
LAAF~\citep{jagtap2020adaptive}          & 1.43E-2 & \textbf{2.77E-1} & 7.68E-1 & 5.79E-1 & 5.93E-1 & \textbf{6.42E-1} & \textbf{7.40E-2} & \textbf{2.39E-2} & 9.99E-1 & 3.60E-2 & 8.24E-2 & 9.98E-1 & 4.54E-1 & 8.19E-1 & 1.06E+0 & \textbf{9.47E-2} & 1.01E+0 & 4.14E-3 & 5.22E-1 \\
\rlightgray
+ \method{}   & \textbf{1.29E-2} & 4.94E-1 & \textbf{6.35E-1} & \textbf{2.04E-1} & \textbf{4.69E-1} & 8.66E-1 & 1.01E-1 & 2.45E-2 & \textbf{9.98E-1} & \textbf{2.13E-2} & \textbf{6.63E-2} & \textbf{9.97E-1} & \textbf{4.20E-1} & \textbf{7.62E-1} & \textbf{1.03E+0} & 9.48E-2 & \textbf{9.98E-1} & \textbf{4.10E-3} & \textbf{4.96E-1} \\
\midrule
GAAF~\citep{jagtap2020adaptive}          & 5.20E-2 & \textbf{2.95E-1} & \textbf{6.04E-1} & 5.02E-1 & \textbf{9.31E-1} & \textbf{8.49E-1} & 9.85E-1 & 4.61E-1 & \textbf{9.99E-1} & 3.79E-2 & 1.74E-1 & 9.99E-1 & 6.77E-1 & 7.94E-1 & 1.06E+0 & 9.46E-2 & \textbf{1.00E+0} & 7.75E-3 & 5.21E-1 \\
\rlightgray
+ \method{}   & \textbf{2.01E-2} & 5.08E-1 & 6.06E-1 & \textbf{2.30E-1} & 9.79E-1 & 1.06E+0 & \textbf{1.26E-1} & \textbf{4.60E-1} & 1.00E+0 & \textbf{3.62E-2} & \textbf{1.67E-1} & 9.99E-1 & \textbf{6.59E-1} & \textbf{7.24E-1} & \textbf{1.03E+0} & \textbf{9.45E-2} & 1.01E+0 & \textbf{7.70E-3} & \textbf{4.85E-1} \\
\bottomrule
\end{tabular}
}
\end{table}

\clearpage

\end{document}